\title[AAMAS-2025 Formatting Instructions]{FedHPD: Heterogeneous Federated Reinforcement Learning via Policy Distillation}
\author{Wenzheng Jiang}
\affiliation{
  \institution{National University of Defense Technology}
  \country{China}}
\email{jiangwenzheng@nudt.edu.cn}
\author{Ji Wang}
\affiliation{
  \institution{National University of Defense Technology}
  \country{China}}
\email{wangji@nudt.edu.cn}
\author{Xiongtao Zhang}
\affiliation{
  \institution{National University of Defense Technology}
  \country{China}}
\email{zhangxiongtao14@nudt.edu.cn}
\author{Weidong Bao}
\affiliation{
  \institution{National University of Defense Technology}
  \country{China}}
\email{wdbao@nudt.edu.cn}
\author{Cheston Tan}
\affiliation{
  \institution{CFAR, A*STAR}
  \country{Singapore}}
\email{cheston-tan@i2r.a-star.edu.sg}
\author{Flint Xiaofeng Fan}
\affiliation{
  \institution{National University of Singapore}
  \country{Singapore}}
\email{fxf@u.nus.edu}
\begin{abstract}
\textit{Federated Reinforcement Learning} (FedRL) improves sample efficiency while preserving privacy; however, most existing studies assume homogeneous agents, limiting its applicability in real-world scenarios. This paper investigates FedRL in black-box settings with heterogeneous agents, where each agent employs distinct policy networks and training configurations without disclosing their internal details. \textit{Knowledge Distillation} (KD) is a promising method for facilitating knowledge sharing among heterogeneous models, but it faces challenges related to the scarcity of public datasets and limitations in knowledge representation when applied to FedRL. To address these challenges, we propose \textit{Federated Heterogeneous Policy Distillation} (FedHPD), which solves the problem of heterogeneous FedRL by utilizing action probability distributions as a medium for knowledge sharing. We provide a theoretical analysis of FedHPD's convergence under standard assumptions. Extensive experiments corroborate that FedHPD shows significant improvements across various reinforcement learning benchmark tasks, further validating our theoretical findings. Moreover, additional experiments demonstrate that FedHPD operates effectively without the need for an elaborate selection of public datasets.
\end{abstract}
\keywords{Federated Reinforcement Learning, Agent Heterogeneity, Knowledge Distillation}
\newcommand{\BibTeX}{\rm B\kern-.05em{\sc i\kern-.025em b}\kern-.08em\TeX}
\newtheorem{assumption}{Assumption}
\newtheorem{proposition}{Proposition}
\newtheorem{theorem}{Theorem}
\newtheorem{lemma}{Lemma}
\newtheorem{corollary}{Corollary}
\newtheorem*{remark}{Remark}
\begin{document}

%%% The following commands remove the headers in your paper. For final 
%%% papers, these will be inserted during the pagination process.

\pagestyle{fancy}
\fancyhead{}

%%% The next command prints the information defined in the preamble.

\maketitle 

%%%%%%%%%%%%%%%%%%%%%%%%%%%%%%%%%%%%%%%%%%%%%%%%%%%%%%%%%%%%%%%%%%%%%%%%

\section{Introduction}
In recent years, \textit{Reinforcement Learning} (RL) has achieved outstanding performance in various tasks \cite{ju2022transferring,xin2022supervised,chai2022design,feng2023dense}. However, it still faces low sample efficiency \cite{yu2018towards} and privacy leakage \cite{pan2019you}. 
\textit{Federated Learning} (FL) \cite{mcmahan2017communication} enables clients to collaboratively improve training efficiency while preserving data privacy, and mitigates instabilities associated with centralized model updates.
The collaborative privacy-preserving characteristics of FL, combined with the need for sample-efficient and private training in RL, have led to the emergence of \textit{Federated Reinforcement Learning} (FedRL).
This fusion offers a promising approach for intelligent decision-making in distributed privacy-sensitive environments~\cite{DAI2024257}.
Recognizing its potential, the research community has extensively explored FedRL under various settings, such as Byzantine fault-tolerance~\cite{fan2021fault,feng2023robust}, environment heterogeneity~\cite{jin2022federated,woo2023blessing,zhang2024fedsarsa,mak2024caesar}, and decentralization~\cite{ETH.DECEN.BYZPG,qiao2024br}.

Despite its promise, most FedRL frameworks \cite{fan2021fault,jin2022federated,zhang2024fedsarsa,fan2024fedrlhf} operate under the assumption of agent homogeneity (i.e., identical policy networks and training configurations), which significantly limits FedRL's applicability in real-world scenarios. 
This limitation is particularly acute in resource-constrained environments, such as in edge environments, where agents have limited power and need to adapt network structures and training strategies based on their operational conditions to achieve effective training \cite{ye2023heterogeneous}. 
In addition, existing FedRL frameworks typically operate under a white-box paradigm, where models are openly shared among participants.
However, in many business-oriented domains, such as healthcare and finance, the disclosure of internal details to server is often prohibited due to commercial sensitivities, intellectual property concerns, and regulatory compliance.
% For example, in fields like healthcare, finance, and law, institutions may be unwilling to disclose their details during collaboration due to certain commercial purposes or intellectual property concerns.
%Industries such as healthcare, finance, and law frequently restrict such information sharing due to commercial sensitivities, intellectual property concerns, or regulatory compliance.
These practical constraints lead us to formulate a more challenging question:
% \textbf{\textit{When each agent possesses a different model that is a black-box to others, how to perform FedRL?}}

\textbf{\textit{How can we effectively perform FedRL when each agent employs a unique model that remains a black box to all other participants?}}

Compared with previous FedRL frameworks, the challenges faced by agent heterogeneity\footnote{In this paper, ``agent heterogeneity'' indicates that policy networks and training configurations of agents in FedRL are heterogeneous \cite{FHQL2023}.} in black-box settings discussed in this research can be revealed as follows:
\begin{itemize}[leftmargin=10pt]
    \item \textbf{Knowledge Representation Difference.} 
    The lack of disclosure of agents' internal details results in a distributed black-box optimization problem \cite{cuccu2022dibb}.
    Traditional federated aggregation methods (such as FedAvg \cite{mcmahan2017communication}) are not applicable due to the agent heterogeneity. 
    Therefore, novel approaches are required to convert and align different models to facilitate effective knowledge sharing.
    \textit{Knowledge Distillation} (KD) \cite{hinton2015distilling} is regarded as an antidote to model heterogeneity \cite{li2019fedmd,lin2020ensemble,zhu2021data}.
    However, existing FedRL methods related to KD are often accompanied by several limitations, such as model-based \cite{ryu2022model} and partial network sharing \cite{mai2023server}, which hinders their applications in our settings.
    \item \textbf{Inconsistent Learning Progress.} 
    The different configurations of agents lead to increased complexity in federated optimization. 
    The inconsistent nature of learning requires effective balancing of knowledge from different agents, otherwise may impact the original convergence process. 
    While existing studies have provided proofs for convergence acceleration in homogeneous settings within FedRL \cite{khodadadian22a,woo2023blessing,zheng2024federated}, such proofs remain a topic of debate in heterogeneous contexts.
\end{itemize}

To the best of our knowledge, two existing studies closely resemble our work in addressing the challenges posed by agent heterogeneity. 
Fan et al. \cite{FHQL2023} proposed FedHQL to address the information inconsistency through inter-agent exploration.
However, FedHQL's reliance on server-side Markov Decision Process (MDP) and frequent state queries to local agents results in high communication overhead and latency, thereby limiting its practicality in bandwidth-constrained scenarios or systems with a large number of participants.
Jin et al. \cite{10609408} proposed DPA-FedRL to investigate privacy issues with multi-heterogeneity.
However, it still relies on the assumption that the server can query black-box agents at any time, and evaluating the global policy does not effectively reflect the sample efficiency improvement for heterogeneous individuals.

In light of the above challenges, we propose \textit{Federated Heterogeneous Policy Distillation} (FedHPD).
A pivotal feature of FedHPD is its elimination of reliance on the server-side MDP during training.
Unlike traditional policy distillation \cite{rusu2015policy}, FedHPD periodically extracts knowledge from local policies to form a global consensus at the server, and then distributes it to local agents for policy updates.
Through distillation, we achieve knowledge alignment among heterogeneous agents.
Compared to querying given states, periodic distillation ensures the continuity and stability of local training and helps to improve policy generalization.
Furthermore, instead of Q-learning methods used in FedHQL, our approach leverages policy gradient techniques, which mitigates value estimation bias and enhances performance in complex environments. 
Through the alternating process of multiple rounds of local training and periodic collaborative training, FedHPD is well-suited to tackle the inconsistent learning process of heterogeneous FedRL across various real-world applications.

\begin{table}[t]
    \caption{Comparison of FedRL methods. AH represents agent heterogeneity; MLU represents multiple local updates; ISM represents independence from server MDP; SEI represents sample efficiency improvement and MF represents model-free.}
	\label{tab:comparison-frl-methods}
    \vspace{-0.2cm}
	\begin{tabular}{cccccc}\toprule
		\textit{Method} & \textit{AH} & \textit{MLU} & \textit{ISM} & \textit{SEI} & \textit{MF} \\ \midrule
		%FedPG-BR\citep{fan2021fault} & \usym{2717} & \usym{2717} & \usym{2713} & \usym{2713} & \usym{2713}\\
		PAVG\citep{jin2022federated} & \usym{2613} & \usym{2714} & \usym{2714} & \usym{2714} & \usym{2714} \\
        FedSARSA\citep{zhang2024fedsarsa} & \usym{2613} & \usym{2714} & \usym{2714}& \usym{2714} & \usym{2714} \\
        Model-based FRD\citep{ryu2022model} & \usym{2613} & \usym{2613} & \usym{2714}& \usym{2613} & \usym{2613} \\
        SCCD\citep{mai2023server} & \usym{2613} & \usym{2714} & \usym{2714}& \usym{2714} & \usym{2714} \\
    	FedHQL\citep{FHQL2023} & \usym{2714} & \usym{2613} & \usym{2613} & \usym{2714} & \usym{2714}\\
        DPA-FedRL\citep{10609408} & \usym{2714} & \usym{2613} & \usym{2613} & \usym{2613} & \usym{2714}\\
    	\textbf{FedHPD (ours)} & \usym{2714} & \usym{2714} & \usym{2714} & \usym{2714} & \usym{2714} \\ \bottomrule
	\end{tabular}
\end{table}

%More concretely, we propose using \textit{knowledge distillation} (KD) \cite{hinton2015distilling} to achieve knowledge sharing between heterogeneous agents and the action probability distribution output by the policy network as the form of knowledge representation.
%In FL, KD is widely used to address the model heterogeneity \cite{li2019fedmd,lin2020ensemble,zhu2021data}, enabling knowledge transfer through the models' outputs on the public dataset.
%However, in RL, there is no public dataset available for distillation.
%To address this, we generate an offline public state set as shared input data using a virtual agent.
%Notably, we separate the generation of state set from the training of agents, with local agents all trained using the easily deployable on-policy REINFORCE algorithm.
%We set up multiple rounds of local updates and periodically implement collaborative training, completely getting rid of the MDP process and on-demand querying mechanism on the server side in FedHQL.

We situate FedHPD with respect to recent advancements of FedRL in Table \ref{tab:comparison-frl-methods}.
The main contributions can be summarized as follows:
\begin{itemize}[leftmargin=10pt]
\item We propose a novel FedRL framework named FedHPD, which firstly introduces KD to address the federated heterogeneous policy gradient problem. By periodically distilling the local policies, it effectively enhance system performance and individual sample efficiency (Section \ref{sec 4 fedhpg}).
\item As a starting point, we prove the convergence of the policy gradient with KD under standard assumptions. 
Building on this, we prove the fast convergence of FedHPD. 
To the best of our knowledge, this is the first theoretically principled FedRL framework with heterogeneous agents (Section \ref{sec.5 Theoretical Analysis}).
\item We extensively evaluate the performance of FedHPD on both discrete and continuous action space tasks, and investigate the impact of the distillation interval to validate our theory (Section \ref{sec.6 Experiments}). 
\end{itemize}

\section{Preliminaries}\label{sec 3 Preliminaries}
\subsection{Markov Decision Process}
Almost all RL problems can be modeled as Markov Decision Processes (MDPs) \cite{sutton2018reinforcement}: $\mathcal{M}$ = $\{\mathcal{S},\mathcal{A},\mathcal{P},\mathcal{R},\gamma,\rho\}$, where $\mathcal{S}$ and $\mathcal{A}$ represent the state and action spaces, ${\mathcal{P}}\left(s^{\prime}\mid s,a\right)$ denotes the transition probability from state $s$ to state $s'$ when taking action $a\in\mathcal{A}$. 
$\mathcal{R}\left(s,a\right):\mathcal{S}\times\mathcal{A}\to\left[0,\mathcal R_{\max}\right]$ is the reward function for taking action $a$ in state $s$, where $\mathcal R_{\max}>0$.
$\gamma\in\left(0,1\right)$ and $\rho$ respectively denote the discount factor and the initial state distribution.

An agent's actions are determined by a policy $\pi$, where $\pi\left(a|s\right)$ denotes the probability for state-action pair $\left(s; a\right)$. A trajectory $\tau\triangleq\{s_{0},a_{0},s_{1},a_{1},\ldots,s_{H-1},a_{H-1},s_{H}\}$ is collected through the interaction with the environment, where $H$ is the task horizon and $s_{0}\sim\rho$. We define $\mathcal{R}\left(\tau\right)\triangleq \sum_{t=0}^{H-1}\gamma^{t}\mathcal{R}\left(s_{t},a_{t}\right)$ as the cumulative reward of this trajectory.

\subsection{Policy Gradient}
Policy Gradient methods \cite{pmlr-v37-schulman15,schulman2017proximal,pmlr-v124-touati20a,pmlr-v151-yuan22a} have demonstrated remarkable success in the model-free RL, particularly showcasing effectiveness in high-dimensional problems and offering the flexibility of stochasticity. In PG, we denote the policy parameterized by $\theta$ as $\pi_{\theta}$, where $\theta$ represents the policy parameters, such as the weights of a neural network. The performance of a policy $\pi_{\theta}$ can be quantified by the expected return $J({\theta})$, defined as: $J(\theta)=\mathbb{E}_{\tau\sim\pi_\theta}\left[\mathcal{R}(\tau)\right]$.

To optimize the policy, we compute the gradient of $J(\boldsymbol{\theta})$ with respect to $\theta$:
\begin{align}\label{pg reinforce}
\nabla_{\theta}J(\theta)=\mathbb{E}_{\tau\sim\pi_\theta}\left[\sum_t\nabla_\theta\log\pi_\theta(a_t|s_t)\mathcal{R}(\tau)\right].
\end{align}

Then the policy parameters $\theta$ can be updated via gradient ascent. 
REINFORCE \cite{sutton2018reinforcement} is a fundamental on-policy RL algorithm that does not require trajectory sampling.
The update of the REINFORCE is: $\theta \leftarrow \theta + \alpha \nabla_\theta J(\theta)$, where $\alpha$ is the learning rate.

\subsection{Knowledge Distillation}
Knowledge Distillation (KD) \cite{hinton2015distilling} is a model compression technique that achieves model learning by minimizing the difference between the outputs of the student model and the teacher model.
The Kullback-Leibler divergence (KL divergence), a common measure of the difference between two probability distributions, is used in KD to quantify the difference between the teacher and student model outputs' probability distributions:
\begin{equation}
\min_{\theta_s}\mathbb{E}_{z\sim D_p}[D_{KL}(p(z;\theta_s)||p(z;\theta_t))],
\end{equation}
where $D_p$ is the proxy dataset, $p(z;\theta_t)$ and $p(z;\theta_s)$ denote the output of the teacher model and the student model, $\theta_t$ and $\theta_s$ are the parameters of the student and teacher models, respectively. 

KD has been extended to the domain of FL to address the model heterogeneity, by treating each client model as the teacher, whose information is aggregated into the student (global) model to improve its generalization performance:
\begin{equation}
\min_{\theta_s}\mathbb{E}_{z\sim D_p}[D_{KL}(p(z;\theta_s)||\frac1K{\sum_{k=1}^K}p(z;\theta_k))],
\end{equation}
where $\theta_k$ represents the parameters of the $k$-th client model.

\section{Federated Heterogeneous Policy Gradient}\label{sec 4 fedhpg}

\subsection{Problem Formulation}\label{sec.4.1}
Assuming the system contains a set of $K$ heterogeneous and mutually black-box agents, where agent $k$ interacts in a separate copy of the MDP $\mathcal{M}$ according to policy $\pi_k$, generating their own local private data ${D_{k}}\triangleq\{(s,a,s^{\prime},r)_{i}\}_{i=1}^{|{D_{k}}|}$.
The policy $\pi_k(a|\phi_k(s;\theta_k))$ is composed of a nonlinear function $\phi_k(s;\theta_k)$ that predicts the probability of taking action $a$ given the state $s$.
The nonlinear function $\phi_k(s;\theta_k)$ is parameterized by a set of parameters $\theta_k$ and is learned using private experience data $D_{k}$.

The nonlinear function is generally approximated using neural networks, and the agent heterogeneity is often manifested in the differences in neural network architectures and training configurations \citep{FHQL2023}.
The key to addressing agent heterogeneity is to find effective methods to enable information sharing among agents. 
We aim to develop a new FedRL framework that allows heterogeneous agents to share their knowledge from local policies in a black-box manner, without being constrained by the server-side MDP process. 
The objective functions of the federated heterogeneous policy gradient can be formulated as follows:

\textit{Assuming that all agents are trained synchronously, given the number of training rounds $T$, let $\psi_k^{\prime}$ and $\psi_k$ represent the training performance of agent $k$ under the FedRL framework and its performance when trained independently, respectively. We have:}

\noindent\textit{Under the same training rounds,}

\textit{for system:}
\begin{equation}
{\sum_{k=1}^K}\psi_k^{\prime} \geq {\sum_{k=1}^K}\psi_k;\label{System object}
\end{equation}

\textit{for agent $k, k\in\{1, 2, \ldots, K\}$:}
\begin{equation}
\psi_k^{\prime} \geq \psi_k. \label{Individual object}
\end{equation}

\subsection{Technical Challenges}
KD can effectively address the model heterogeneity in FL \cite{li2019fedmd,lin2020ensemble,zhu2021data}. 
Inspired by this, we explore how to introduce it into FedRL to deal with the agent heterogeneity.
Although KD has been previously applied in FedRL \cite{ryu2022model, mai2023server}, the methods used are not well-suited to address the heterogeneous FedRL problem.
In this section, we discuss some unique challenges posed by introducing KD to address the problem:
\begin{itemize}[leftmargin=10pt]
\item \textbf{Public Dataset Scarcity.} KD in FL achieves knowledge sharing among clients via public dataset.
Most FL works foucs on classification tasks in supervised learning, where training and testing datasets are available. 
However, in RL, training data is obtained through interaction with the environment.
Therefore, acquiring public dataset is the critical issue.
To tackle this issue, we generate an offline public state set as shared input data using a virtual agent.
Consistent with the core intent of GANs \cite{goodfellow2020generative}, we aim to generate synthetic data using this virtual agent.

\item \textbf{Knowledge Representation Limitation.} 
In RL, especially when using policy gradient algorithms, we need to solve tasks in continuous action spaces.
Traditional federated KD methods typically use logits as the form of knowledge representation, which is common in classification tasks but cannot be used in continuous action spaces tasks \cite{mai2023server}.
Thus, we use the action probability distribution output by the policy network as the form of knowledge representation. 
%For tasks in discrete action spaces, the action distribution can be represented as a discrete probability distribution derived from the softmax output, whereas in continuous action spaces, it is typically modeled as a continuous Gaussian distribution.
However, this also means that we need to consider the impact of distributional differences across different tasks, such as the discrete distribution derived from the softmax output and Gaussian distribution.
\end{itemize}

\subsection{FedHPD}\label{sec 4.3}
%Inspired by Ryu et al. \cite{ryu2022model}, we utilize a simulated environment model $\theta_{env}$ to obtain public dataset.
%This is reasonable in real-world because agents are typically trained in simulated environments before operating in real scenarios.
%Typically, the environment model $\theta_{env}$ can be neural network, physics engine, etc.
In practical applications, agents are typically trained in simulated environments before being deployed in real-world.
Consequently, we set up a simulated environment on the server and train a virtual agent within it.
Although optimal performance is not required, reasonable parameterization is needed to ensure stable operation in the environment.
For example, in autonomous driving tasks, the simulated environment can be based on an existing simulation platform (such as CARLA \cite{pmlr-v78-dosovitskiy17a}), where the virtual agent undergoes preliminary training to learn basic driving strategies.
Next, we conduct multiple tests with different initial state inputs to simulate the diverse scenarios that the agent may encounter.
In each test, the virtual agent generates a series of states, from which a subset of states is randomly selected to form the public state set $\mathcal{S}_p$.

\begin{figure}[t]
\centering
\includegraphics[scale=0.44]{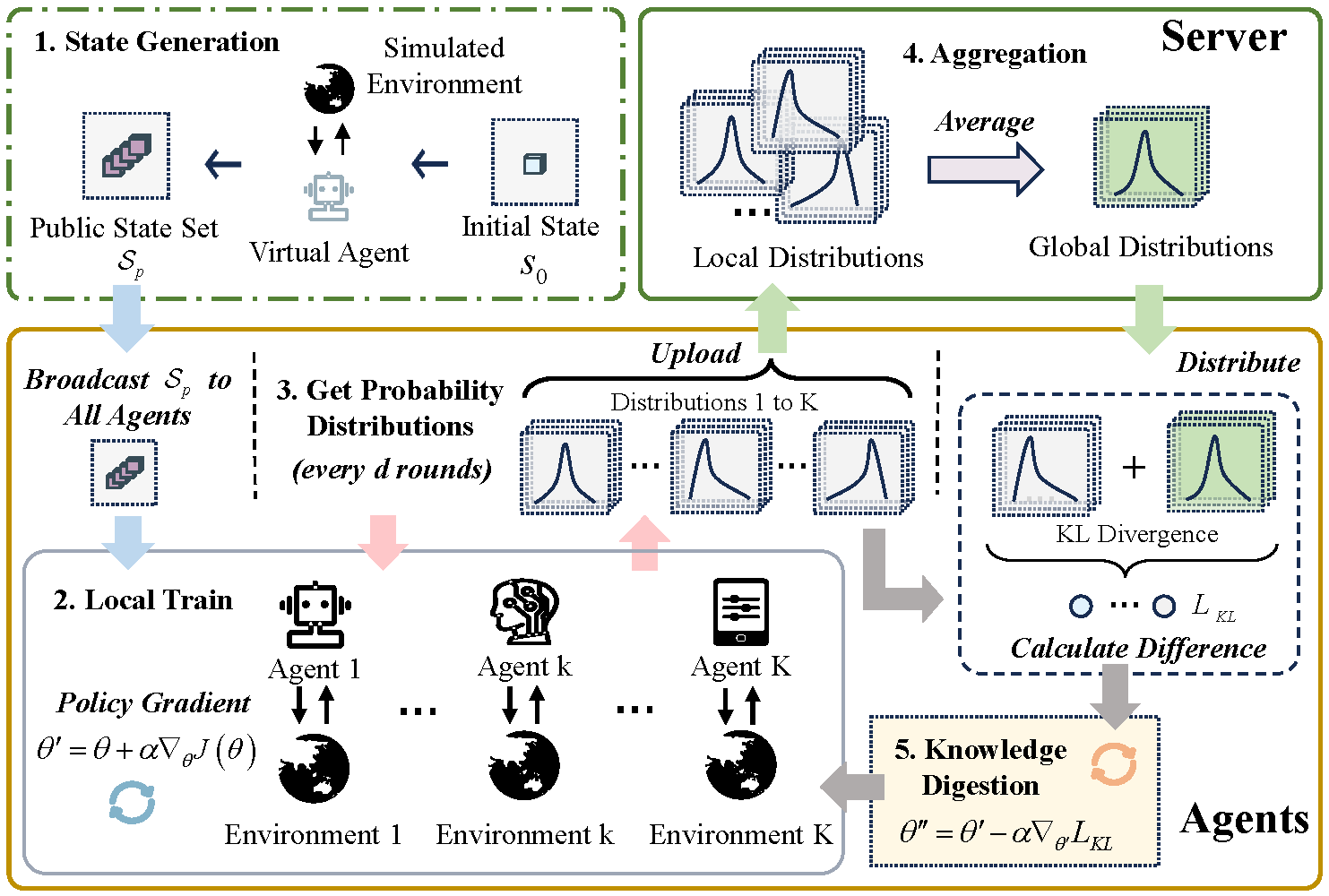}
\vspace{-0.2cm}
\caption{Illustration of FedHPD.~1. Generate public state set $S_p$ through the virtual agent;~2. Agents perform local training;~3. Get action probability distributions through public state set;~4. Knowledge aggregation to form global consensus;~5. Calculate KL divergence to execute knowledge digestion.}
\label{Graph Illustration}
\end{figure}

Notably, the synthetic state set $\mathcal{S}_p$ generated by the virtual agent is independent of local training and serves solely for KD.
Of course, for tasks like autonomous driving, robot control, and games, there is an abundance of available experience data that can directly serve as the public state set $\mathcal{S}_p$.
In conclusion, obtaining the public state set is reasonable and convenient in real-world.
For communication considerations, $\mathcal{S}_p$ is distributed to each local agent before the start of training, so that the communication only involves the upload and distribution of the knowledge.

To handle tasks both in discrete and continuous action spaces, we use action probability distribution as a bridge for communication.
Agent $k$ generates its private experience data $D_k$ by interacting with the environment for local training and REINFORCE is deployed locally in FedHPD.
%During the collaborative training, knowledge is transmitted through action probability distributions obtained under the input of the public state set $S_p$.
During the collaborative training process, knowledge (probability distribution) is distilled from the policies of heterogeneous agents using the public state set $\mathcal{S}_p$.
Subsequently, agents digest knowledge by comparing the distributions output by themselves with the global average distributions (global consensus).
In other words, local agents use public state set $\mathcal{S}_p$ and private data $D_k$ to train and improve their policy $\phi_k$, surpassing individual efforts. 
Our framework is illustrated in Fig.\ref{Graph Illustration}.

\begin{algorithm}[t]
\caption{FedHPD}\label{Algorithm 1}
\KwIn{Training Rounds $T$, Distillation Interval $d$, Initial Agent policy networks $\theta^{0}_1, \theta^{0}_2,  \ldots, \theta^{0}_{K}$, Public State Set $\mathcal{S}_p$}

\For{$i = 0, 1, \ldots, {T-1}$}
{   \tcp{Local Training}
    \For{Agent~$k\in\{1, 2, \ldots, K\}$}
    { 
        {
           $\nabla_{\theta^{i}_k} J(\theta^{i}_k) = \mathbb{E}_{\pi_{\theta^{i}_k}} \left[ \nabla_{\theta^{i}_k} \log \pi_{\theta^{i}_k}(a|s) R \right]$\\
           $\theta^{i+1}_k \leftarrow \theta^{i}_k + \alpha^{i}_{k} \nabla_{\theta^{i}_k} J(\theta^{i}_k)$
        }
    }
    \tcp{Collaborative Training}
    \If{$i+1\equiv0\pmod{d}$}
    {
                \tcp{Get Probability Distributions}
                \For{Agent~$k\in\{1, 2, \ldots, K\}$}
                {$\mathcal{P}^{i+1}_k=\pi_{\theta^{i+1}_k}(a|\mathcal{S}_p)$
                }
                Upload Probability Distributions $\mathcal{P}^{i+1}_k, k\in\{1, 2, \ldots, K\}$ to Server\\
                \tcp{Knowledge Aggregation}
                $\overline{\mathcal{P}^{i+1}}=\frac{1}{K}\sum\limits_{k=1}^{K}{\mathcal{P}^{i+1}_k}$~\tcp{Global Consensus}
                Server Broadcast $\overline{\mathcal{P}^{i+1}}$ to All Agents\\
                \tcp{Knowledge Digestion}
        \For{Agent~$k\in\{1, 2, \ldots, K\}$}
            {$\mathcal{L}^l_k=D_{KL}(\mathcal{P}^{i+1}_k~\|~\overline{\mathcal{P}^{i+1}})$\\ 
            $\theta^{i+1}_k \leftarrow \theta^{i+1}_k-\alpha^{i}_{k}\nabla_{\theta^{i+1}_k}\mathcal{L}^l_k$
            }
    }
}
\end{algorithm}

The pseudocode for FedHPD is described in Algorithm \ref{Algorithm 1}, which consists of two training stages:

(1)~Local Training (line 1-4): In each training round, all local agents must interact with the environment, using the generated data $D_k$ to update their own policy parameters $\theta^{i}_k$, resulting in $\theta^{i+1}_k$. Note that no collaboration is required at this stage.

(2)~Collaborative Training (line 5-13): In the collaborative training phase (conducted every $d$ training rounds), agents share their knowledge based on the output distributions under the public state set $\mathcal{S}_p$. The detailed steps are as follows: 
\begin{itemize}[leftmargin=10pt]
\item Get Probability Distributions (line 5-7): Each local agent obtains the action distributions ${\mathcal{P}^{i+1}_k}$ under state set $\mathcal{S}_p$ based on its own parameterized policy $\pi(\theta^{i+1}_k)$, and uploads ${\mathcal{P}^{i+1}_k}$ to server; 
\item Knowledge Aggregation (line 8-10): Server aggregates the uploaded probability distributions to obtain global consensus $\overline{\mathcal{P}^{i+1}}$ and sends $\overline{\mathcal{P}^{i+1}}$ to  local agents;
\item Knowledge Digestion (line 11-13): Local agents calculate the KL divergence between their predicted distributions ${\mathcal{P}^{i+1}_k}$ and global distributions $\overline{\mathcal{P}^{i+1}}$ to update their policy parameters $\theta^{i+1}_k$.
\end{itemize}

It is noteworthy that, to reduce communication overhead, we set a distillation interval $d$, where agents engage in collaborative training after every $d$ rounds of local training.
A larger distillation interval implies fewer federated rounds are required. 
In summary, FedHPD achieves collaborative training of heterogeneous agents through policy distillation, enabling effective policy updates without additional sampling.

\section{Theoretical Analysis}\label{sec.5 Theoretical Analysis}
In this section, we delve into the convergence of policy optimization in the context of KD.
Specifically, we focus on two questions: 
\begin{itemize}[leftmargin=10pt]
    \item Does the introduction of KD affect the convergence of the original policy gradient algorithm ? (Theorem \ref{theorem: convergence of PG with KD})
    \item Can FedHPD achieve fast convergence ? (Corollary \ref{fast convergence FedHPD})
\end{itemize}

%Due to the model heterogeneity, we analyze the convergence of individual agents. 
Let $J(\pi_{\theta_k})$ be the objective function for the policy optimization of the $k$-th agent.
Given the introduction of KD, the objective function can be reformulated as:
\begin{equation}\label{eq: J'}
J'(\theta_k) = J(\theta_k) - \lambda~ D_{\text{KL}}(\pi_{\theta_k}(a|\mathcal{S}_p) || \pi_{global}(a|\mathcal{S}_p)),
\end{equation}
where $\lambda>0$ is the regularization coefficient. $\pi_{\theta_k}(a|\mathcal{S}_p)$ is the action probability distributions output by the policy $\pi_{\theta_k}$ over the state set $\mathcal{S}_p$ and global action probability distributions $\pi_{global}(a|\mathcal{S}_p) = \frac{1}{K}\sum_{k=1}^K \pi_{\theta_k}(a|\mathcal{S}_p)$.

Eq.~(\ref{eq: J'}) redefines the objective function, and by introducing the KL divergence term, we can reduce the difference between the local policies through multiple updates.
Interestingly, the objective function of $J'(\theta_k)$ is similar to PPO with KL penalty \cite{schulman2017proximal}. 
The difference is that in our research, we aim to integrate global consensus through the KL divergence term.

Here, we provide some assumptions required for theoretical analysis, all of which are common in the literature.
\begin{assumption}[Policy parameterization regularity]\label{policy derivate}
Let $\pi_{\theta}(a|s)$ be the policy of an agent at state $s$. For every $s, a\in\mathcal{S}\times\mathcal{A}$, there exists constants $G, M>0$ such that the log-density of the policy function satisfies:
\begin{equation}\|\nabla_{\theta}\log\pi_{\theta}(a|s)\|\leq G,\quad\left\|\nabla_{\theta}^{2}\log\pi_{\theta}(a|s)\right\|\leq M.\end{equation}
\end{assumption}
Assumption \ref{policy derivate} is aimed to ensure that the changes in the policy within the parameter space are controllable, providing a foundation for the smoothness assumption of the objective function \cite{pmlr-v48-allen-zhua16,pmlr-v48-reddi16}.
\begin{proposition}\label{L-smooth of J(theta)}
Under Assumption \ref{policy derivate}, $J({\theta})$ is $L$-smooth. Therefore, for all $\theta, \theta' \in \mathbb{R}^d$, there exist a constant $L_J > 0$ satisfies:
\begin{equation}
\|\nabla_{\theta}J({\theta})-\nabla_{\theta'}J({\theta'})\|\leq L_J\|\theta-\theta'\|.
\end{equation}
\end{proposition}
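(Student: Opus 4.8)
The plan is to establish $L$-smoothness of $J(\theta)$ by bounding the Hessian $\nabla^2_\theta J(\theta)$ in operator norm, since a bounded Hessian immediately yields the Lipschitz gradient condition via the mean value inequality $\|\nabla J(\theta) - \nabla J(\theta')\| \leq \big(\sup_\xi \|\nabla^2 J(\xi)\|\big)\|\theta - \theta'\|$. The starting point is the score-function gradient from Eq.~(\ref{pg reinforce}), which I would first rewrite in trajectory form as $\nabla_\theta J(\theta) = \mathbb{E}_{\tau \sim \pi_\theta}\big[\nabla_\theta \log \pi_\theta(\tau)\, \mathcal{R}(\tau)\big]$, where $\nabla_\theta \log \pi_\theta(\tau) = \sum_{t=0}^{H-1} \nabla_\theta \log \pi_\theta(a_t|s_t)$ is the trajectory score.

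Differentiating once more, I would apply the log-derivative (REINFORCE) identity a second time to express the Hessian as
\begin{equation}
\nabla^2_\theta J(\theta) = \mathbb{E}_{\tau \sim \pi_\theta}\Big[\big(\nabla_\theta \log \pi_\theta(\tau) \nabla_\theta \log \pi_\theta(\tau)^\top + \nabla^2_\theta \log \pi_\theta(\tau)\big)\mathcal{R}(\tau)\Big].
\end{equation}
The two curvature pieces can then be controlled separately using Assumption~\ref{policy derivate}. For the outer-product term, the trajectory score satisfies $\|\nabla_\theta \log \pi_\theta(\tau)\| \leq \sum_{t=0}^{H-1}\|\nabla_\theta \log \pi_\theta(a_t|s_t)\| \leq HG$ by the triangle inequality and the first bound $G$, so its squared norm is at most $H^2 G^2$. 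For the second-derivative term, the additivity of $\nabla^2_\theta \log \pi_\theta(\tau) = \sum_t \nabla^2_\theta \log \pi_\theta(a_t|s_t)$ and the bound $M$ give $\|\nabla^2_\theta \log \pi_\theta(\tau)\| \leq HM$. The reward factor is bounded using the MDP structure: since $\mathcal{R}(s,a) \in [0, \mathcal{R}_{\max}]$ and $\gamma \in (0,1)$, we have $|\mathcal{R}(\tau)| = |\sum_{t=0}^{H-1} \gamma^t \mathcal{R}(s_t,a_t)| \leq \mathcal{R}_{\max}/(1-\gamma)$ (or simply $H\mathcal{R}_{\max}$ for the finite horizon). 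Combining these, $\|\nabla^2_\theta J(\theta)\| \leq (H^2 G^2 + HM)\cdot \mathcal{R}_{\max}/(1-\gamma) =: L_J$, which is the desired constant.

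The main obstacle I anticipate is the rigorous justification for exchanging differentiation and expectation when passing the second derivative inside the $\mathbb{E}_{\tau \sim \pi_\theta}[\cdot]$, because the sampling distribution itself depends on $\theta$; this is precisely why the score-function trick must be reapplied rather than differentiating a fixed integrand. Establishing this interchange cleanly requires a dominated-convergence argument, for which the uniform bounds in Assumption~\ref{policy derivate} together with the bounded reward provide an integrable dominating function over the (finite-horizon) trajectory space. A secondary subtlety is ensuring every trajectory-level quantity remains finite, which is guaranteed here by the finite horizon $H$ and the compact reward range; for the infinite-horizon case one would instead lean on the geometric decay from $\gamma < 1$ to keep the series summable. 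Once the interchange is granted, the remaining steps are the routine norm bounds sketched above, so I would present them compactly rather than in full detail.
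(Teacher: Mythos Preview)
Your proposal is correct and follows the standard route for this result. The paper itself does not give a proof of Proposition~\ref{L-smooth of J(theta)}; it simply cites Xu et al.\ (and later, in Eq.~(\ref{eq: upper bound of L_J}), quotes the constant $L_J = \frac{\mathcal{R}_{\max}}{(1-\gamma)^2}(G^2+M)$ from Lemma~4.4 of Yuan et al.). Your Hessian-bound argument via a second application of the score-function identity is precisely the mechanism underlying those cited proofs, so there is no substantive divergence in approach.

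One minor remark: your resulting constant $(H^2 G^2 + HM)\cdot \mathcal{R}_{\max}/(1-\gamma)$ is looser than the $(1-\gamma)^{-2}(G^2+M)\mathcal{R}_{\max}$ bound the paper later invokes, because you bound the trajectory score by $HG$ via the triangle inequality rather than exploiting the discounted-occupancy structure. This does not affect correctness, since the proposition only asserts the existence of some $L_J>0$, but be aware that the paper relies on the sharper form downstream when arguing $L = L_J - \lambda L_{KL} > 0$.
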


Proposition \ref{L-smooth of J(theta)} is important for demonstrating convergence and its proof can be found in Xu et al.
\cite{xu20a_uai}. 
Additionally, Assumption \ref{policy derivate} and Proposition \ref{L-smooth of J(theta)} are widly used in policy gradient methods \cite{pmlr-v80-papini18a,fan2021fault,pmlr-v151-yuan22a,pmlr-v202-fatkhullin23a}.

\begin{assumption}[Softmax Function Smoothness]\label{Softmax Function Smoothness}
Let $\pi_{\theta}(a_i|\mathcal{S}_p)$ denote the probability distribution obtained from the softmax function:
\begin{equation}
\pi_{\theta}(a_i|\mathcal{S}_p)=\frac{e^{\theta^Tf(a_i,\mathcal{S}_p)}}{\sum_je^{\theta^Tf(a_j,\mathcal{S}_p)}}.\notag    
\end{equation}
Assume that the softmax function is $L$-smooth with respect to $\theta$. Specifically, this means:

(i) The first-order derivatives of $\pi_{\theta}(a_i|\mathcal{S}_p)$ with respect to $\theta$ are continuous:
\begin{equation}
\nabla_{\theta} \pi_{\theta}(a_i|\mathcal{S}_p) = \pi_{\theta}(a_i|\mathcal{S}_p) \left( \mathbf{e}_i - \pi_{\theta}(\cdot|\mathcal{S}_p) \right),\notag    
\end{equation}
where $\mathbf{e}_i$ is the unit vector corresponding to $a_i$.

(ii) The second-order derivatives of $\pi_{\theta}(a_i|\mathcal{S}_p)$ with respect to $\theta$ are continuous:
\begin{equation}
\nabla_{\theta}^2 \pi_{\theta}(a_i|\mathcal{S}_p) = \text{diag}(\pi_{\theta}(\cdot|\mathcal{S}_p)) - \pi_{\theta}(\cdot|\mathcal{S}_p) \pi_{\theta}(\cdot|\mathcal{S}_p)^\top.\notag    
\end{equation}
\end{assumption}

\begin{assumption}[Gaussian Distribution Smoothness]\label{Gaussian Distribution Smoothness}
Let $\pi_{\theta}(a|\mathcal{S}_p)$ denote the probability density function of a Gaussian distribution:
\begin{equation}
\pi_{\theta}(a|\mathcal{S}_p) = \frac{1}{\sqrt{2 \pi \sigma^2}} e^{\left(-\frac{(a - \mu)^2}{2 \sigma^2}\right)}, \notag   
\end{equation}
where $\mu = \mu(\theta)$ and $\sigma^2 = \sigma^2(\theta)$.

Assume that $\mu(\theta)$ and $\sigma^2(\theta)$ are $L$-smooth functions of $\theta$. Specifically, this means:

(i) The first-order derivatives of $\mu(\theta)$ and $\sigma^2(\theta)$ with respect to $\theta$ are continuous: $\nabla_{\theta} \mu(\theta) \text{ and } \nabla_{\theta} \sigma^2(\theta) \text{ are continuous}.$

(ii) The second-order derivatives of $\mu(\theta)$ and $\sigma^2(\theta)$ with respect to $\theta$ are continuous: $\nabla_{\theta}^2 \mu(\theta) \text{ and } \nabla_{\theta}^2 \sigma^2(\theta) \text{ are continuous}.$
\end{assumption}

Assumptions 2 and 3 establish the foundation for the $L$-smoothness of the KL divergence term, and it is reasonable to assume smoothness for distributions in machine learning \cite{baxter2001infinite,silver2014deterministic,NIPS2016_b3ba8f1b,sutton2018reinforcement}.

\begin{lemma}[KL Divergence Smoothness]\label{D_KL smooth}
Under Assumptions \ref{Softmax Function Smoothness}, \ref{Gaussian Distribution Smoothness}, the KL divergence term $D_{\text{KL}}(\pi_{\theta_k}(a|\mathcal{S}_p) || \pi_{global}(a|\mathcal{S}_p))$ is $L$-smooth. Specifically, there exists a constant $L_{KL} > 0$ such that for any two parameter sets $\theta_k$ and $\theta_k'$,  the following inequality holds:
\begin{equation}
\begin{aligned}
\|\nabla_{\theta_k}D_{\text{KL}}(\pi_{\theta_k} \| \pi_{global}) &- \nabla_{\theta_k'}D_{\text{KL}}(\pi_{\theta_k'} \| \pi_{global})\|\\&\leq L_{KL} \|\theta_k - \theta_k'\|,     
\end{aligned}   
\end{equation}
where $\pi_{\theta_k}$ and $\pi_{\theta_k'}$ are probability distributions parameterized by $\theta_k$ and $\theta_k'$ respectively.
\end{lemma}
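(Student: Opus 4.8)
The plan is to reduce the claim to a uniform bound on the operator norm of the Hessian: since $\|\nabla^2_{\theta_k} D_{\text{KL}}(\pi_{\theta_k}\|\pi_{global})\| \le L_{KL}$ for all $\theta_k$ is equivalent to the stated Lipschitz continuity of the gradient, it suffices to exhibit such a bound. Throughout I would treat $\pi_{global}$ as the fixed reference distribution that is broadcast during the knowledge-digestion step of Algorithm \ref{Algorithm 1}, so the only $\theta_k$-dependence enters through $\pi_{\theta_k}$. Because the two task families induce different output distributions, I would bound the Hessian separately for the softmax (discrete) and Gaussian (continuous) cases and then take $L_{KL}$ to be the larger of the two resulting constants.

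For the discrete case I would expand $D_{\text{KL}} = \sum_i p_i(\log p_i - \log q_i)$ with $p_i = \pi_{\theta_k}(a_i|\mathcal{S}_p)$ and $q_i = \pi_{global}(a_i|\mathcal{S}_p)$, and differentiate. Using $\sum_i \nabla_{\theta_k} p_i = 0$ (since $\sum_i p_i \equiv 1$), the gradient collapses to $\sum_i \nabla_{\theta_k} p_i(\log p_i - \log q_i)$, and a second differentiation gives
\begin{equation}
\nabla^2_{\theta_k} D_{\text{KL}} = \sum_i \nabla^2_{\theta_k} p_i\,(\log p_i - \log q_i) + \sum_i \frac{\nabla_{\theta_k} p_i\,(\nabla_{\theta_k} p_i)^\top}{p_i}. \notag
\end{equation}
Invoking the first- and second-order softmax identities of Assumption \ref{Softmax Function Smoothness}, the second summand is a probability-weighted feature covariance in which the $1/p_i$ is cancelled exactly by the $p_i^2$ hidden in $\nabla_{\theta_k}p_i(\nabla_{\theta_k}p_i)^\top$, so it is bounded whenever the score features are bounded. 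For the first summand the same identities yield $\|\nabla^2_{\theta_k}p_i\| \le C\,p_i$, so its norm is at most $C\sum_i p_i|\log p_i - \log q_i|$, which is finite because $p\mapsto p\log p$ is bounded on $[0,1]$ and $\pi_{global}$ has full support on $\mathcal{S}_p$.

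For the continuous case I would use the closed form $D_{\text{KL}} = \log(\sigma_g/\sigma) + (\sigma^2 + (\mu-\mu_g)^2)/(2\sigma_g^2) - 1/2$, where $\mu=\mu(\theta_k)$, $\sigma^2=\sigma^2(\theta_k)$, and $(\mu_g,\sigma_g^2)$ are the fixed global parameters. This is a smooth function of $(\mu,\sigma^2)$, and by Assumption \ref{Gaussian Distribution Smoothness} both $\mu(\theta_k)$ and $\sigma^2(\theta_k)$ have continuous first and second derivatives in $\theta_k$. Applying the chain rule, $\nabla^2_{\theta_k} D_{\text{KL}}$ is a sum of products of the partials of $D_{\text{KL}}$ in $(\mu,\sigma^2)$ with the Jacobian and Hessian of $(\mu,\sigma^2)$ in $\theta_k$; each factor is uniformly bounded, which delivers a constant $L_{KL}^{\mathrm{Gauss}}$.

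The main obstacle is making the bound genuinely uniform rather than only local, since the terms $\log p_i$, $1/p_i$ (discrete) and $\log\sigma$, $1/\sigma^2$ (Gaussian) are singular at the boundary of the output space. I would resolve this by combining the vanishing prefactors identified above — the $p_i$ weighting that tames $p_i\log p_i$ and the exact $1/p_i$ cancellation — with the mild regularity that $\pi_{global}$ has full support and that $\sigma^2(\theta_k)$ remains in a bounded interval $[\sigma_{\min}^2,\sigma_{\max}^2]$, which is the practical content of Assumptions \ref{Softmax Function Smoothness} and \ref{Gaussian Distribution Smoothness}. Setting $L_{KL} = \max\{L_{KL}^{\mathrm{soft}}, L_{KL}^{\mathrm{Gauss}}\}$ then yields the stated Lipschitz bound on the gradient of the KL term.
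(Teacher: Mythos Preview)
Your proposal is correct and follows essentially the same strategy as the paper: split into the softmax and Gaussian cases, invoke the respective smoothness assumptions, and combine. The paper works directly at the level of the gradient of $D_{\text{KL}}$ and argues Lipschitz continuity from the smoothness of the softmax map (resp.\ of $\mu(\theta_k),\sigma^2(\theta_k)$), whereas you take the equivalent route of bounding the Hessian operator norm; your treatment is a bit more careful in tracking the $p_i$ cancellations and the boundary behaviour of $\log p_i$, $1/\sigma^2$, which the paper leaves implicit.
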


Lemma \ref{D_KL smooth} is crucial for the convergence of the FedHPD. 
It is related to the $L$-smoothness of the overall objective function $J'({\theta_k})$ after introducing KD.
Due to space constraints, the proof of Lemma \ref{D_KL smooth} is deferred to Appendix \ref{proof of lemma 1}.

\begin{theorem}[Convergence of REINFORCE with Knowledge Distillation]\label{theorem: convergence of PG with KD}
Under Assumptions \ref{policy derivate}, \ref{Softmax Function Smoothness}, \ref{Gaussian Distribution Smoothness}, if we choose $L = L_J - \lambda L_{KL} > 0$ and learning rates $\alpha^i$ satisfy Robbins-Monro conditions, the objective function $J'({\theta_k})$ can converge to a stationary point as the number of iterations increases. 
\end{theorem}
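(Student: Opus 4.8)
The plan is to observe that the per-round local update of FedHPD is exactly one step of stochastic gradient ascent on the regularized objective in Eq.~(\ref{eq: J'}): the REINFORCE step contributes $+\alpha^i\nabla J(\theta_k)$ and the knowledge-digestion step contributes $-\alpha^i\lambda\nabla D_{\text{KL}}(\pi_{\theta_k}\|\pi_{global})$, so that with the teacher $\pi_{global}$ held fixed the two together realize an ascent step on $J'(\theta_k)=J(\theta_k)-\lambda D_{\text{KL}}(\pi_{\theta_k}\|\pi_{global})$. Convergence then follows the classical route for stochastic first-order methods on smooth nonconvex objectives.

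First I would establish that $J'$ is $L$-smooth. Proposition~\ref{L-smooth of J(theta)} supplies the $L_J$-smoothness of $J$ and Lemma~\ref{D_KL smooth} supplies the $L_{KL}$-smoothness of the KL term; combining the two gradient-Lipschitz bounds for $J'=J-\lambda D_{\text{KL}}$ produces a finite smoothness modulus $L$, and the hypothesis $L=L_J-\lambda L_{KL}>0$ is what guarantees the constant entering the ascent inequality is positive. I would also record two consequences of the assumptions: $J'$ is bounded above, since $J(\theta)=\mathbb{E}[\mathcal{R}(\tau)]\le \mathcal{R}_{\max}/(1-\gamma)$ and $-\lambda D_{\text{KL}}\le 0$; and the REINFORCE estimator is unbiased with a uniformly bounded second moment $\sigma^2$, because Assumption~\ref{policy derivate} bounds $\|\nabla_\theta\log\pi_\theta\|\le G$ and the reward is bounded, so the score-weighted return has bounded variance.

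Next I would invoke the ascent lemma implied by $L$-smoothness. Writing the update as $\theta^{i+1}=\theta^i+\alpha^i g^i$ with $\mathbb{E}[g^i\mid\theta^i]=\nabla J'(\theta^i)$ and $\mathbb{E}[\|g^i\|^2\mid\theta^i]\le\|\nabla J'(\theta^i)\|^2+\sigma^2$, the lemma and conditional expectation give
\[
\mathbb{E}\big[J'(\theta^{i+1})\mid\theta^i\big]\ge J'(\theta^i)+\alpha^i\|\nabla J'(\theta^i)\|^2-\tfrac{L(\alpha^i)^2}{2}\big(\|\nabla J'(\theta^i)\|^2+\sigma^2\big).
\]
Taking total expectations, rearranging, and summing over $i=0,\dots,T-1$ telescopes the $J'$ terms; the upper bound on $J'$ together with $\sum_i(\alpha^i)^2<\infty$ from the Robbins--Monro condition bounds the right-hand side uniformly in $T$. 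Hence $\sum_i\alpha^i\,\mathbb{E}[\|\nabla J'(\theta^i)\|^2]<\infty$, and since $\sum_i\alpha^i=\infty$ this forces $\liminf_i\mathbb{E}[\|\nabla J'(\theta^i)\|^2]=0$, i.e.\ the iterates approach a stationary point of $J'$. For an almost-sure version I would instead feed the per-step inequality into the Robbins--Siegmund almost-supermartingale convergence theorem.

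The main obstacle I anticipate is bookkeeping around smoothness and variance rather than any single hard estimate. I must ensure the combined gradient-Lipschitz constant of $J'$ is the quantity that actually enters the ascent inequality, being careful with the sign in $L_J-\lambda L_{KL}$ and with the requirement $L>0$; and I must verify that the REINFORCE second moment $\sigma^2$ is genuinely finite along the whole (possibly unbounded) parameter trajectory, which is precisely where the uniform bound on $\|\nabla_\theta\log\pi_\theta\|$ from Assumption~\ref{policy derivate}, combined with the finite horizon and bounded reward, is indispensable. A secondary subtlety worth making explicit is that the digestion step targets the fixed public-state distribution $\pi_{global}$, so its gradient is deterministic and does not inflate the variance, keeping $\sigma^2$ attributable solely to the REINFORCE term.
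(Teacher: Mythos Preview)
Your proposal is correct and follows essentially the same route as the paper: establish $L$-smoothness of $J'$ by combining Proposition~\ref{L-smooth of J(theta)} and Lemma~\ref{D_KL smooth}, apply the ascent (descent) lemma to the update $\theta^{i+1}=\theta^i+\alpha^i\nabla J'(\theta^i)$, telescope, and invoke the Robbins--Monro conditions to force the gradient norm to vanish. The one substantive difference is that you carry the stochasticity of the REINFORCE estimator through explicitly via conditional expectations and a uniform second-moment bound $\sigma^2$, whereas the paper's argument proceeds as if the exact gradient were available at each step; your treatment is therefore the more careful of the two, but the skeleton of the argument is identical.
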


The proof of Theorem \ref{theorem: convergence of PG with KD} is deferred to Appendix \ref{proof: theorem 1}. 
Theorem \ref{theorem: convergence of PG with KD} analyzes the convergence conditions for REINFORCE with KD, but does not consider the effect of distillation interval.
Building upon Theorem \ref{theorem: convergence of PG with KD}, we incorporate the distillation interval $d$, leading to the analysis of FedHPD:

\begin{corollary}[Fast Convergence of FedHPD]\label{fast convergence FedHPD}
In the setting of Theorem \ref{theorem: convergence of PG with KD}, if we choose $\lambda=1$, FedHPD can achieve fast convergence by reducing variance, leading to:
\begin{equation}
N_{REINFORCE with KD}\leq N_{FedHPD} \leq N_{REINFORCE},
\end{equation}
where $N$ represents the sample size, and REINFORCE with KD refers to FedHPD without distillation intervals.
\end{corollary}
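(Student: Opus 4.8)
The plan is to cast all three algorithms into the single convergence template inherited from Theorem~\ref{theorem: convergence of PG with KD} and then compare their sample complexities solely through the variance of the stochastic policy gradient. First I would recall that for the $L$-smooth objective $J'(\theta_k)$ with $L=L_J-\lambda L_{KL}>0$, the standard ascent analysis of a REINFORCE-type update yields a bound of the form
\[
\min_{0\le i\le T-1}\mathbb{E}\bigl\|\nabla J'(\theta_k^i)\bigr\|^2 \;\le\; \frac{2\bigl(J'_{\max}-J'(\theta_k^0)\bigr)}{\alpha T} \;+\; \frac{L\alpha}{2}\,\sigma^2,
\]
where $\sigma^2$ upper bounds the variance of the trajectory-based gradient estimator. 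Inverting this bound to reach an $\epsilon$-stationary point gives a sample complexity $N=\Theta(\sigma^2/\epsilon^2)$ that is monotonically increasing in $\sigma^2$. The whole corollary then reduces to ordering the effective gradient variances of the three schemes.

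Second, I would isolate the source of variance reduction. The policy-gradient step carries the Monte-Carlo noise of trajectory sampling, whereas the knowledge-digestion step moves each agent toward the consensus $\overline{\mathcal{P}^{i+1}}=\frac1K\sum_k\mathcal{P}_k^{i+1}$, which is deterministic given the fixed public set $\mathcal{S}_p$ and, being an average over $K$ agents, contracts the per-agent deviation. With $\lambda=1$ this consensus pull is exactly the regularizer in $J'$, so each distillation round shrinks the sampling variance accumulated since the previous one. Using the $L$-smoothness of the KL term from Lemma~\ref{D_KL smooth}, I would quantify this contraction factor in terms of $\lambda L_{KL}$ and the number of agents $K$.

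Third, I would track the effective variance $\sigma^2_d$ as a function of the distillation interval $d$. Over one period an agent performs $d$ noisy ascent steps, during which sampling variance accumulates, followed by a single deterministic contraction. Writing the per-period recursion for the accumulated variance, I would show $\sigma^2_d$ is nondecreasing in $d$, since a larger interval lets the agents drift and accumulate more noise before the next contraction. The two endpoints are pinned by the extremes: $d=1$ is REINFORCE with KD, where the contraction acts every round and the variance $\sigma^2_1$ is smallest; removing the KL term (equivalently $d\to\infty$) recovers plain REINFORCE with the largest variance $\sigma^2_\infty$; any finite $d>1$ of FedHPD sits between them. Hence $\sigma^2_1\le\sigma^2_d\le\sigma^2_\infty$, and by the monotonicity of $N$ in $\sigma^2$ from the first step the stated chain $N_{\text{REINFORCE with KD}}\le N_{\text{FedHPD}}\le N_{\text{REINFORCE}}$ follows.

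The hard part will be the third step: making the variance recursion rigorous. One must prove that the consensus step genuinely contracts, rather than merely redistributes, the accumulated sampling variance, and that this contraction weakens monotonically as $d$ grows. This demands a careful bound on the variance of the averaged distribution $\overline{\mathcal{P}^{i+1}}$ relative to the individual $\mathcal{P}_k^{i+1}$, together with control of how the inter-distillation drift scales with $d$, so that the limiting cases $d=1$ and $d\to\infty$ provably recover the two baselines and the middle case is sandwiched strictly between.
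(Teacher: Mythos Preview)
Your route differs substantially from the paper's, and the difference matters because it changes the mechanism driving the variance inequality.

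The paper does \emph{not} treat the distillation step as a deterministic contraction that averages away accumulated trajectory noise. Instead, it keeps the single-agent view and writes the step-wise gradient of $J'(\theta)=J(\theta)-\lambda D_{\mathrm{KL}}(\pi_\theta\|\pi_{\mathrm{global}})$ as a sum of two random terms, then expands
\[
\mathrm{Var}[\nabla_\theta J'(\theta)]=\mathrm{Var}[\nabla_\theta J(\theta)]+\lambda^2\mathrm{Var}[\nabla_\theta D_{\mathrm{KL}}]-2\lambda\,\mathrm{Cov}(\nabla_\theta J(\theta),\nabla_\theta D_{\mathrm{KL}}).
\]
Variance reduction is therefore a \emph{control-variate} phenomenon: it holds only under the alignment condition $\cos(\theta_{J\&KL})>\tfrac12\|\nabla_\theta D_{\mathrm{KL}}\|/\|\nabla_\theta J\|$, which the paper argues is plausible (and easier to meet for small $d$) rather than proves unconditionally. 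The paper then sandwiches FedHPD between $d=1$ and $d=\infty$ directly at the level of this per-step variance, and converts the variance ordering to sample size by Chebyshev's inequality, $N\ge \mathrm{Var}[\hat g]/(\delta\epsilon^2)$, not by inverting a smooth-descent bound.

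Your plan instead attributes the variance drop to the averaging over $K$ agents in $\overline{\mathcal{P}^{i+1}}$ and treats the KD update as deterministic, then builds a recursive contraction in $d$. This is a cleaner optimization-style story and your sample-complexity inversion via the $L$-smooth bound is more principled than the paper's Chebyshev step. But it bypasses the covariance condition that the paper makes explicit, and your ``hard part'' is precisely where the two stories diverge: the paper never proves a contraction of accumulated noise, only a per-step variance comparison contingent on gradient alignment. If you pursue your line, you will need an argument the paper does not supply; if you want to match the paper, the missing ingredient is the covariance expansion and its angular reformulation, together with the informal claim that smaller $d$ makes that inequality easier to satisfy.
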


\begin{proof}
According to the description of FedHPD in Section \ref{sec 4.3}, we introduce knowledge distillation with a period of $d$. 

The objective function of FedHPD can be formulated as:
\begin{equation}
J''(\theta_k)=\begin{cases}J(\theta_k)~,~i~\%~d \neq 0.\\J'(\theta_k)~,~i~\%~d = 0.\end{cases}
\end{equation}

In the setting of Theorem \ref{theorem: convergence of PG with KD}, $J(\theta_k)$ will progressively approach the global optimal solution, thus ensuring convergence \cite{williams1992simple,sutton2018reinforcement}.
Theorem \ref{theorem: convergence of PG with KD} demonstrates that $J'(\theta_k)$ is $L$-smooth and converges under the learning rates which satisfy Eq. (\ref{Robbins-Monro conditions}).

Therefore, the optimization process of $J''(\theta_k)$ alternates between two sub-processes that have been proven to converge.
As long as the learning rates $\alpha^i$ satisfy Eq. (\ref{Robbins-Monro conditions}), the entire optimization process will still converge.

Next, we discuss the fast convergence of FedHPD.
For the vanilla REINFORCE, the sampled gradient for a single trajectory is:
\begin{equation}
\hat{g}(\theta)=\sum_{t}\nabla_{\theta}\log\pi_{\theta}(a_{t}|s_{t})R(\tau).
\end{equation}

The variance of the gradient can be formulated as:
\begin{align}
\mathrm{Var}&[\nabla_\theta J(\theta)]=\mathbb{E}\left[\left(\sum_t\nabla_\theta\log\pi_\theta(a_t|s_t)R(\tau)\right)^2\right] \notag\\&-\left(\mathbb{E}\left[\sum_t\nabla_\theta\log\pi_\theta(a_t|s_t)R(\tau)\right]\right)^2. 
\end{align}

To prevent the KL divergence term from altering the overall gradient direction while reducing gradient variance, $\lambda$ should be a relatively small positive number.
The variance of the new gradient can be formulated as:
\begin{align}\label{eq: variance of the new gradient}
\mathrm{Var}[\nabla_\theta J^{\prime}(\theta)]&=\mathrm{Var}\left[\nabla_\theta J(\theta)\right]+{\lambda^2}\mathrm{Var}[\nabla_\theta D_{\mathrm{KL}}(\pi_\theta||\pi_{\mathrm{global}})]\notag\\&-2\lambda\mathrm{Cov}(\nabla_\theta J(\theta),\nabla_\theta D_\mathrm{KL}(\pi_\theta||\pi_\mathrm{global})).
\end{align}

The selection on the value of $\lambda$ is deferred to Appendix \ref{Discussion lamda value}.
In FedHPD, substituting $\lambda = 1$, we have:
\begin{align}
\mathrm{Var}[\nabla_\theta J^{\prime}(\theta)]&=\mathrm{Var}\left[\nabla_\theta J(\theta)\right]+\mathrm{Var}[\nabla_\theta D_{\mathrm{KL}}(\pi_\theta||\pi_{\mathrm{global}})]\notag\\&-2\mathrm{Cov}(\nabla_\theta J(\theta),\nabla_\theta D_\mathrm{KL}(\pi_\theta||\pi_\mathrm{global})).
\end{align}

Therefore, the variance between the gradients of the two objective functions satisfies:
\begin{equation}\label{eq:var<var}
\mathrm{Var}[\nabla_\theta J'(\theta)]<\mathrm{Var} [\nabla_\theta J(\theta)],
\end{equation}
if $\mathrm{Var}[\nabla_\theta D_{\mathrm{KL}}(\pi_\theta||\pi_{\mathrm{global}})]\notag<2\mathrm{Cov}(\nabla_\theta J(\theta),\nabla_\theta D_\mathrm{KL}(\pi_\theta||\pi_\mathrm{global}))$.

The above condition can be reformulated as:
\begin{equation}\label{eq:fast covergence condition}
    \cos(\theta_{J\&KL})>\frac{1}{2}\frac{\|\nabla_\theta D_{\mathrm{KL}}(\pi_\theta||\pi_{\mathrm{global}})\|}{\|\nabla_\theta J(\theta)\|},
\end{equation}
where $\theta_{J\&KL}$ denotes the angle between $\nabla_\theta D_{\mathrm{KL}}(\pi_\theta||\pi_{\mathrm{global}})$ and $\nabla_\theta J(\theta)$.

Eq.~(\ref{eq:fast covergence condition}) indicates that the condition for variance reduction is that enhancing rewards and reducing the discrepancy with the global consensus must align to some extent.
From the standpoint of this, a lower value of $\frac{\|\nabla_\theta D_{\mathrm{KL}}(\pi_\theta||\pi_{\mathrm{global}})\|}{\|\nabla_\theta J(\theta)\|}$ makes variance reduction easier to achieve.
Actually, it typically tends to lower values to allow effective learning while maintaining stability \cite{pmlr-v37-schulman15,schulman2017proximal,NEURIPS2019_Divergence-Augmented,pmlr-v124-touati20a}, especially in the early stages of the training.
%Therefore, the key to achieving fast convergence lies in whether the difference between policy updates and collaborative updates falls within the limits described by Eq.~(\ref{eq:fast covergence condition}).
A smaller distillation interval results in smaller differences between  policies of heterogeneous agents, thus making Eq.~(\ref{eq:fast covergence condition}) easier to satisfy.

%Actually, the variance of the KL divergence term is usually small because the KL divergence serves as a smoothness constraint on the policy \cite{pmlr-v37-schulman15,schulman2017proximal,NEURIPS2019_Divergence-Augmented,pmlr-v124-touati20a}.
%The negative covariance term can suppress excessive fluctuations in policy updates, reducing drastic changes in the gradient and making the training process more stable.

Combined with Eq. (\ref{eq:var<var}), for FedHPD, we have:
\begin{equation}\label{eq:var<var<var}
\mathrm{Var}[\nabla_\theta J'(\theta)]\leq\mathrm{Var} [\nabla_\theta J''(\theta)]\leq\mathrm{Var} [\nabla_\theta J(\theta)],
\end{equation}
where the equality is obtained for $d=1$ and $d=\infty$.

The reduction in variance implies that, for a given precision $\epsilon$, fewer samples are required to achieve the same level of accuracy. 
According to concentration inequalities (e.g., Chebyshev's inequality):
\begin{equation}
\mathbb{P}\left(|\hat{g}-\mathbb{E}[g]|\geq\epsilon\right)\leq\frac{\mathrm{Var}[\hat{g}]}{N\epsilon^2}.
\end{equation}

To ensure that this probability is below a threshold $\delta$, we require:
\begin{equation}
N\geq\frac{\mathrm{Var}[\hat{g}]}{\delta\epsilon^2}.
\end{equation}

Combined with Eq. (\ref{eq:var<var<var}), the proof can be completed.
\end{proof}

\begin{remark}
Existing FedRL frameworks with theoretical convergence focus on homogeneous agents. 
We provide a preliminary theoretical analysis of the convergence conditions for heterogeneous agents in FedHPD and the fast convergence of FedHPD, offering guidance and reference for experimental validation.
\end{remark}

\section{Experiments}\label{sec.6 Experiments}
We conduct experiments to validate the effectiveness of FedHPD using three representative tasks from OpenAI Gym \cite{brockman2016openai}: Cartpole and LunarLander for discrete action spaces, and InvertedPendulum for continuous action spaces.
Our experimental design aligns with the objective functions presented in Section \ref{sec.4.1}, focusing on two critical aspects of improvement:
\begin{itemize}[leftmargin=10pt]
    \item \textbf{System-level Performance:} We assess whether FedHPD achieves higher average rewards across all agents compared with independent training.
    \item \textbf{Individual Agent Performance:} We evaluate whether each agent in the federated system demonstrates improved performance relative to its independently trained counterpart.
\end{itemize}

\subsection{Experimental Settings} 
In our experiments, we set up 10 heterogeneous agents, each trained locally using the vanilla on-policy REINFORCE algorithm. 
%All agents are heterogeneous, with unknown internal details. 
Each agent differs in policy networks (model architecture and activation functions) and training configurations (learning rates). 
Tables \ref{tab:Hyperparameters}, \ref{tab:Experiment Cartpole Configurations}, \ref{tab:Experiment LunarLander Configurations}, and \ref{tab:Experiment InvertedPendulum Configurations} in Appendix~\ref{Appendix:Configurations} summarizes the variations among the agents for the three tasks, respectively.

To examine the effect of the distillation interval $d$ on FedHPD during the collaborative training phase, we run FedHPD using various $d$ values ($d = 5, 10, 20$) in validating the performance improvement.
In all experimental results, we report the average rewards under different random seeds. Each experiment is independently run with five different random seeds (20, 25, 30, 35, 40)\footnote{Our code is available at \href{https://github.com/WinzhengKong/FedHPD}{https://github.com/WinzhengKong/FedHPD}.}.

%
% seed 42 is always a good number to use : )
%

\subsection{System Performance Improvement}
First, we evaluate the system-level performance improvement of FedHPD, as described in Eq.~(\ref{System object}). For the entire system, we focus on the average performance of all agents, comparing the system performance under independent local training (NoFed) and FedHPD with different distillation intervals ($d = 5, 10, 20$).

\begin{table}[t]
	\caption{System's Average Rewards under Different Tasks.}
	\label{tab:average returns of the system}
    \vspace{-0.2cm}
    \centering
	\begin{tabular}{cccccc}\toprule
		\textit{Algorithms} & \textit{Cartpole} & \textit{LunarLander} & \textit{InvertedPendulum} \\ \midrule
	    NoFed & 243.19 & 37.86 & 153.91 \\
        FedHPD(d=5) & 338.47 & 91.95 & 327.01 \\
        FedHPD(d=10) & 319.66 & 83.78 & 293.27 \\
        FedHPD(d=20) & 286.67 & 61.39 & 266.35 \\
        \bottomrule
	\end{tabular}
\end{table}

\begin{figure*}[bhtp]	
	\begin{minipage}{0.32\linewidth}	\centerline{\includegraphics[width=\textwidth]{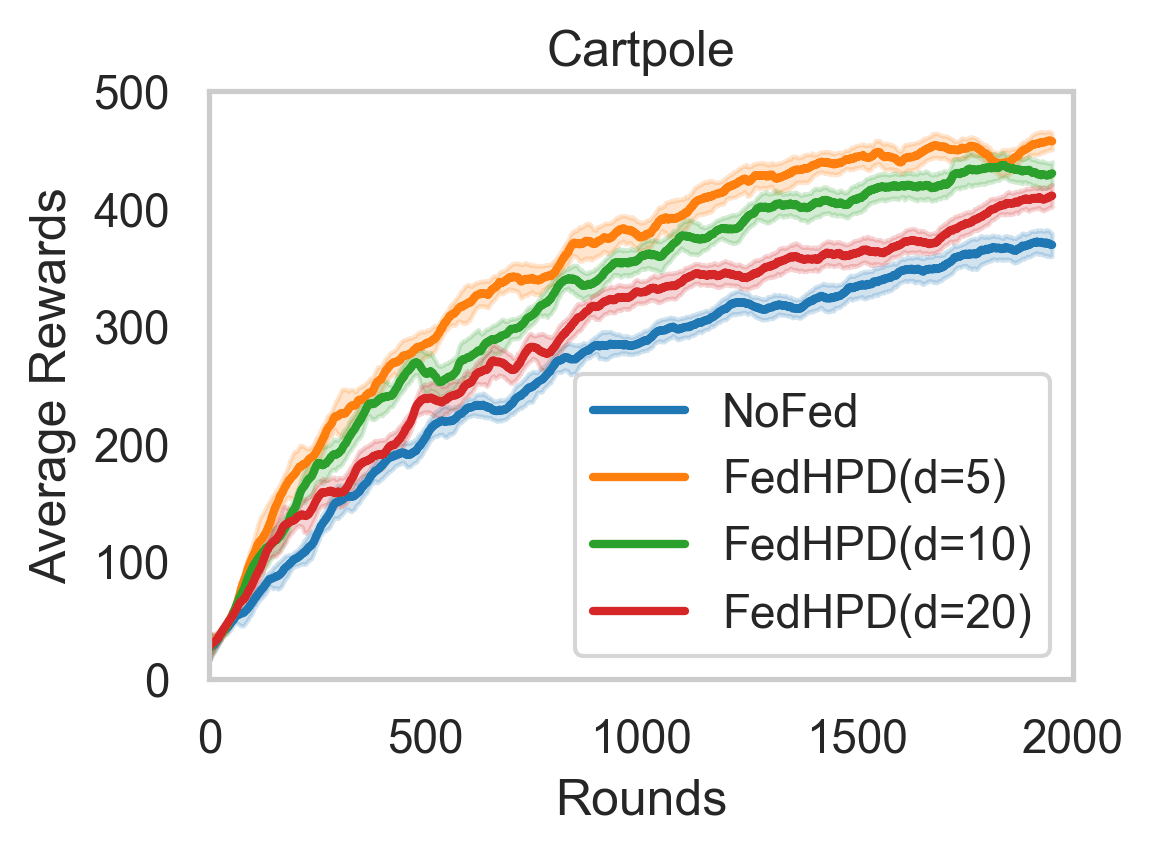}}
          % 加入对这列的图片说明
	\end{minipage}
	\begin{minipage}{0.32\linewidth}		\centerline{\includegraphics[width=\textwidth]{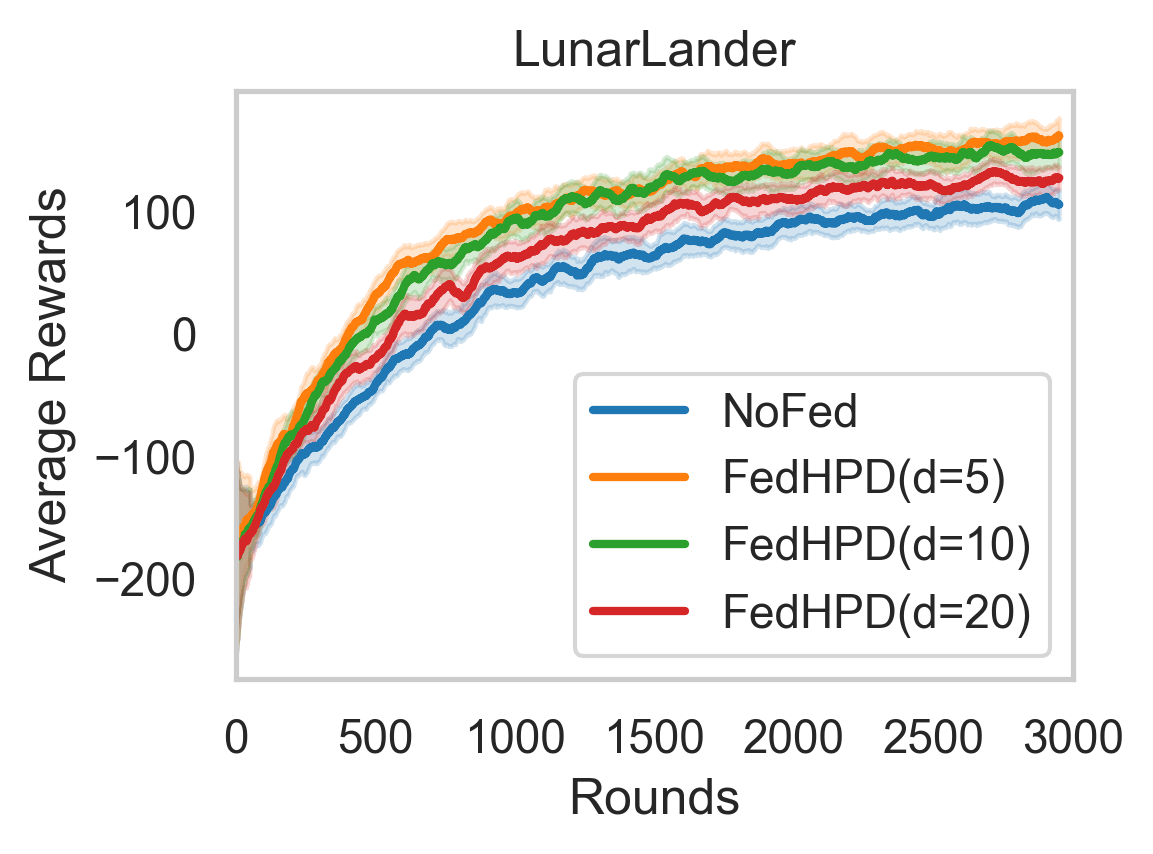}}
	\end{minipage}
	\begin{minipage}{0.32\linewidth}	\centerline{\includegraphics[width=\textwidth]{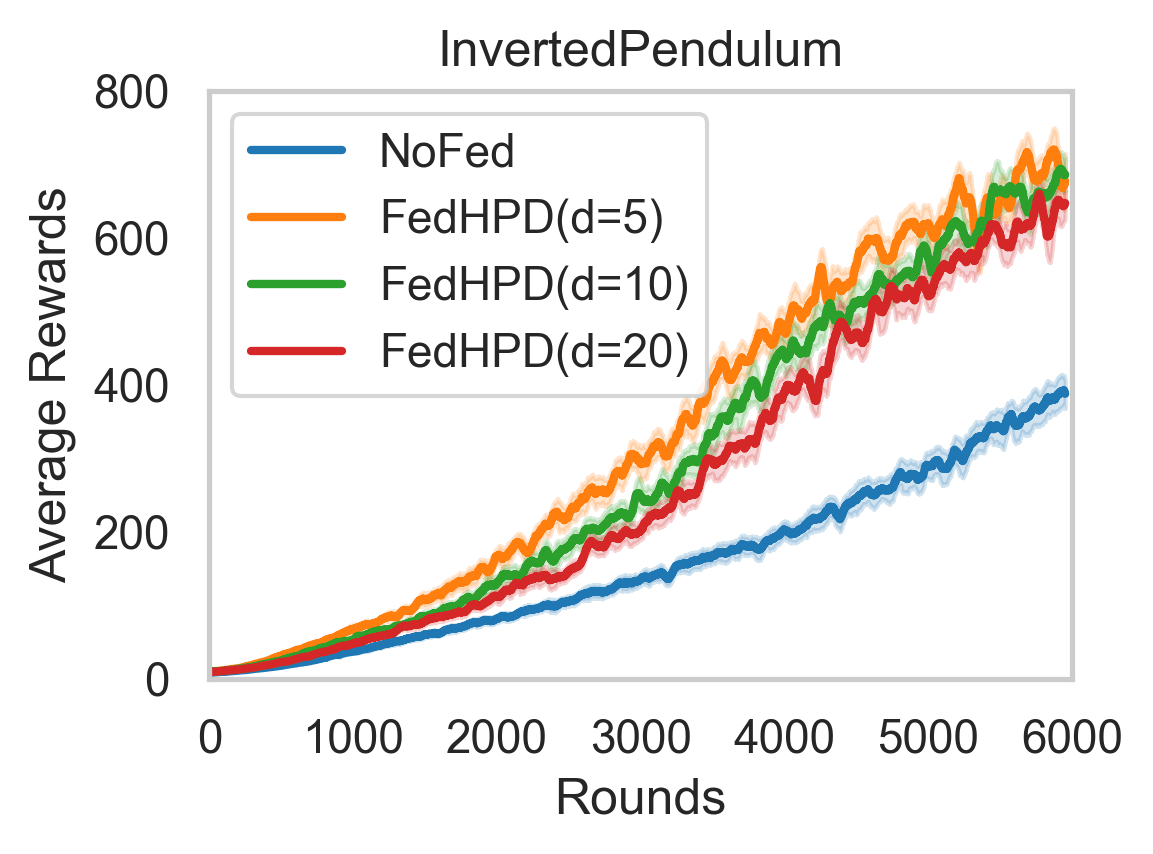}}
	\end{minipage}
    \vspace{-0.2cm}
	\caption{Comparisons of system performance under different distillation intervals ($d$ = 5, 10, 20).}
	\label{system}
\end{figure*}
\begin{figure*}[tp]	
\begin{minipage}{1\linewidth}
	\centerline{\includegraphics[width=0.96\textwidth]{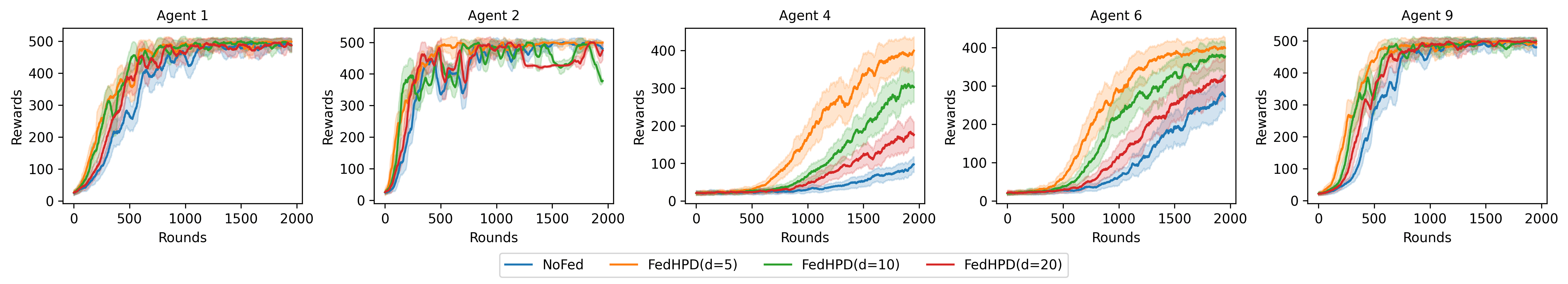}}	\centerline{\includegraphics[width=0.96\textwidth]{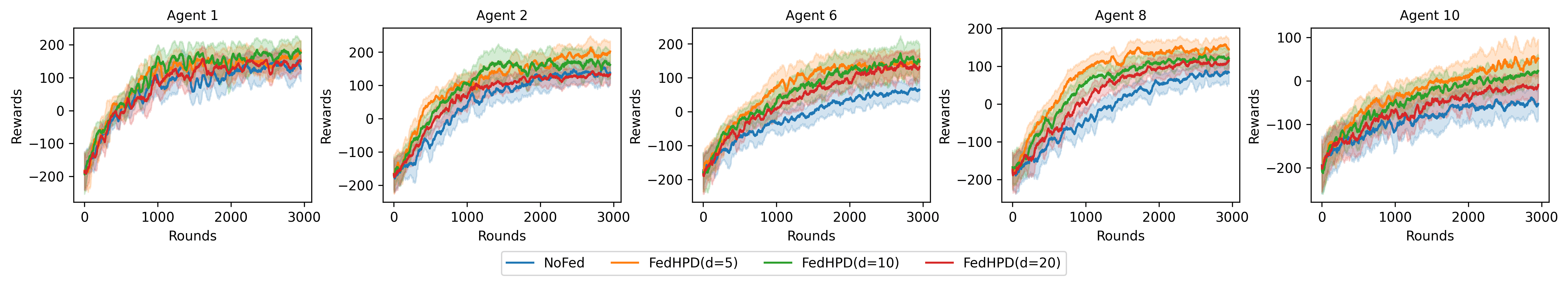}}
    \centerline{\includegraphics[width=0.96\textwidth]{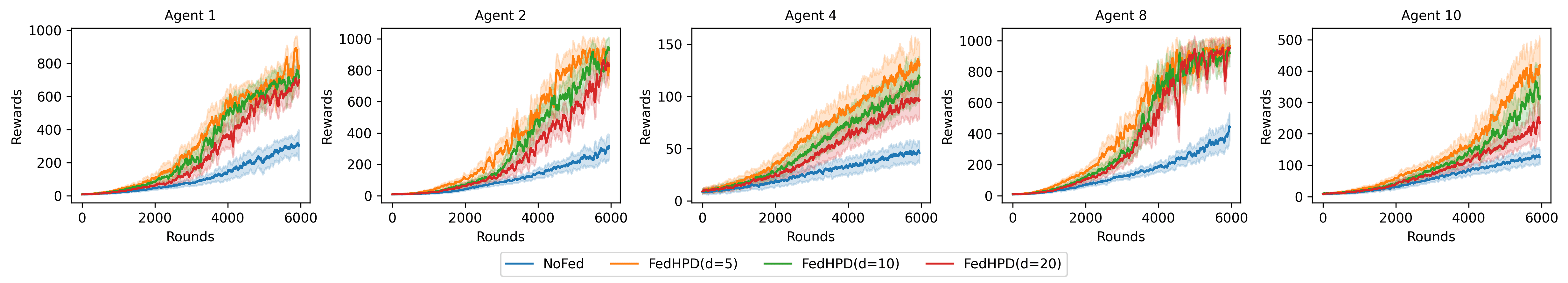}}
	\end{minipage}
    \vspace{-0.2cm}
	\caption{Comparisons of selected individual performance under different distillation intervals ($d$ = 5, 10, 20).}
	\label{individual agent partial}
\end{figure*} 

The average rewards of the system during the entire training process are presented in Table \ref{tab:average returns of the system}.
It is evident that, compared to NoFed, FedHPD achieves superior results across all tasks.
The system performance comparison of the three tasks is shown in Fig. \ref{system}. 
For the Cartpole task, the system typically needs around 1200 local training rounds to achieve a reward above 300 under NoFed, whereas with FedHPD, it requires only about 700 training rounds on average to reach the same reward. For the InvertedPendulum task, with a limited number of training rounds, NoFed can only obtain a reward of approximately 400, but with FedHPD (regardless of the $d$ value), the system consistently achieves rewards exceeding 600.

Additionally, it can be observed that the value of $d$ has less impact on overall system performance in LunarLander and InvertedPendulum compared to Cartpole. We hypothesize that in simpler tasks, where policies converge faster, the distillation frequency significantly affects the speed and stability of policy optimization. In contrast, in more complex tasks, the effect of adjusting the distillation interval on performance is overshadowed by the complexity of the task and the randomness of the environment, making it less pronounced.

\subsection{Individual Performance Improvement}
Next, we evaluate the improvement effect of FedHPD on individual agents, corresponding to Eq.~(\ref{Individual object}). We aim for FedHPD to enhance not only the system's average performance but also the performance of each individual agent in the system.

%In this section, we compare the reward trajectories for each agent under independent training (NoFed) and collaborative training with FedHPD across different tasks. 
Due to space constraints, Fig. \ref{individual agent partial} presents comparisons of selected agents in the Cartpole, LunarLander, and InvertedPendulum tasks from top to bottom, with the complete comparisons available in Appendix \ref{Appendix C.1}.
We find that FedHPD enhances the learning efficiency of the majority of agents and at least maintains the original learning efficiency. In Cartpole, Agent 1 and Agent 9 show performance improvements with FedHPD ($d$ = 5, 10, 20) compared to NoFed, requiring on average 300 fewer training rounds to reach convergence. In LunarLander, FedHPD enables most agents to achieve positive rewards more quickly and converge to better strategies. However, performance may slightly decline with larger $d$ values (e.g., Agents 2 and 4), analogous to the situation observed in Cartpole (e.g., Agents 2 and 7). This is attributed to suboptimal parameterizations or training configurations for these agents. In InvertedPendulum, FedHPD achieves significant performance improvements across all agents, greatly enhancing sample efficiency.

Interestingly, FedHPD shows a more pronounced improvement for agents that performed poorly under independent training, and as $d$ decreases, the training efficiency of these agents improves notably. For detailed examples, refer to Agents 4, 6, and 10 in Cartpole; Agents 6, 8, and 10 in LunarLander; and Agents 2, 4, and 10 in InvertedPendulum. This indirectly suggests that the introduction of KD can improve training efficiency, as global consensus helps guide suboptimal policies towards better ones.

\subsection{Comparison with Related Work}
%To the best of our knowledge, we are the first to introduce KD to address the federated heterogeneous policy gradient.
According to the comparison in Table \ref{tab:comparison-frl-methods}, most of the existing methods are incapable of addressing heterogeneous FedRL.
Both FedHQL and DPA-FedRL take into account the issue of agent heterogeneity,
but since FedHQL is a value-based method specifically designed for Q-learning, we select the most representative and relevant DPA-FedRL \cite{10609408} for comparison.
For experimental fairness, we adjust DPA-FedRL's server-side state query process to $d$ = 1, representing continuous queries without intervals.

Fig. \ref{pic:System baseline} presents system performance comparisons between DPA-FedRL and FedHPD ($d$ = 5, 10, 20) in Cartpole and LunarLander, while the individual comparisons are deferred to Appendix \ref{Appendix: FedHPD baselines}.
FedHPD can achieve better performance than DPA-FedRL under different distillation intervals, effectively improving system performance and individual sample efficiency.
Moreover, while DPA-FedRL can accelerate early-stage convergence, it may also lead to suboptimal policies, indicating that it does not effectively resolve the information inconsistency caused by agent heterogeneity.
\begin{figure}[htbp]
    \begin{minipage}{0.49\linewidth}	\centerline{\includegraphics[width=\textwidth]{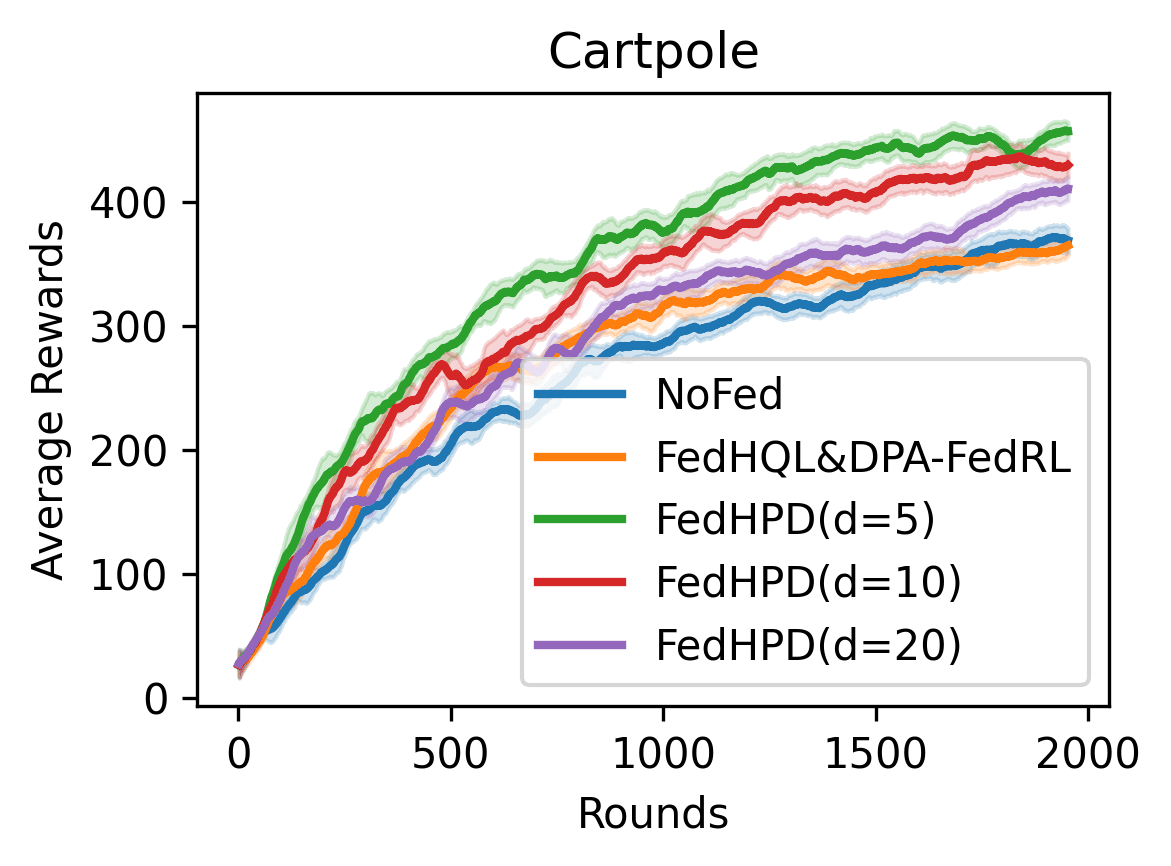}}
          % 加入对这列的图片说明
    \end{minipage}
    \begin{minipage}{0.49\linewidth}		\centerline{\includegraphics[width=\textwidth]{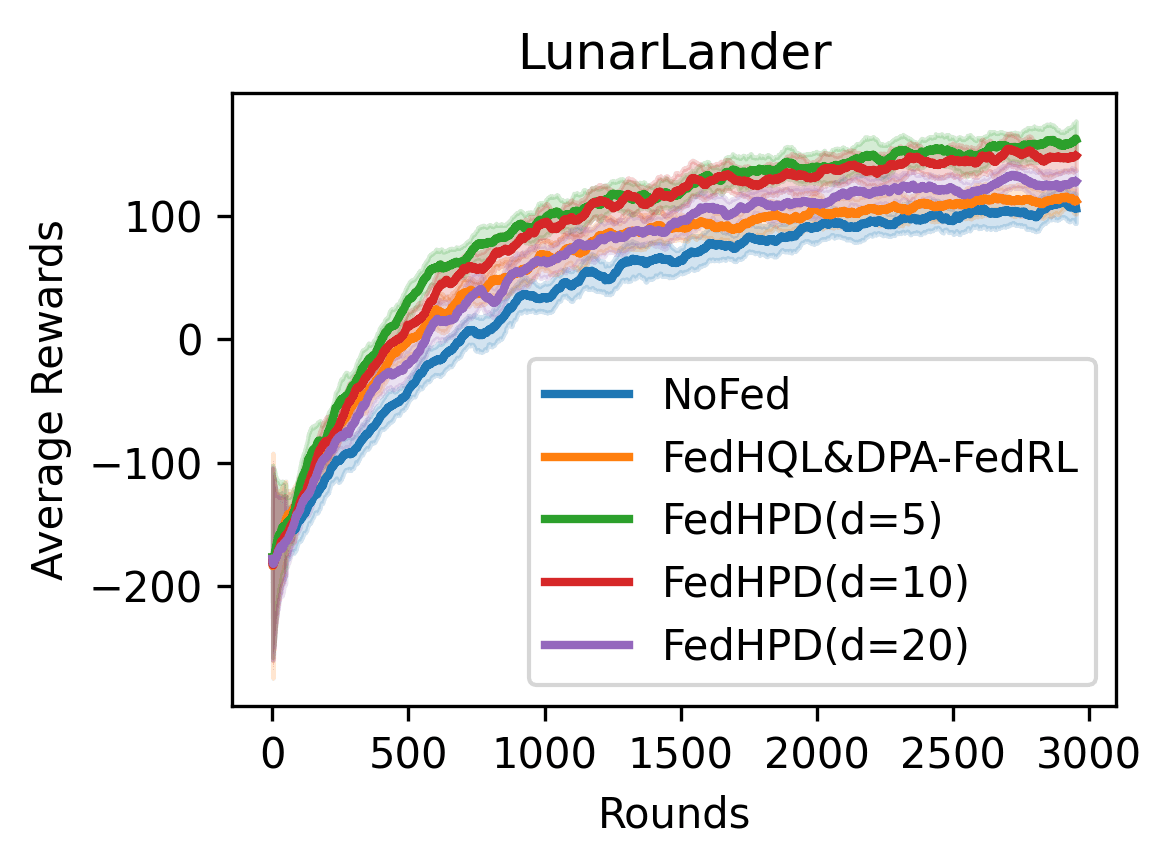}}
    \end{minipage}
\caption{Comparisons of system performance between DPA-FedRL and FedHPD ($d$ = 5, 10, 20).}
\label{pic:System baseline}
\end{figure}

\subsection{Evaluation on the Distillation Interval $d$}
% Analysis of Fig. \ref{system} and Fig. \ref{individual agent partial} reveals that increasing the distillation interval (i.e., increasing the d value) reduces system performance, but the degree of performance reduction varies for different individuals.
For the entire system, we aim to achieve the best possible performance while maintaining low communication overhead, but for some individuals, the system achieves this balance at the expense of their interests.
Therefore, in this section, we conduct a more specific evaluation on the distillation interval $d$.
\begin{figure}[ht]
    \begin{minipage}{0.49\linewidth}	\centerline{\includegraphics[width=\textwidth]{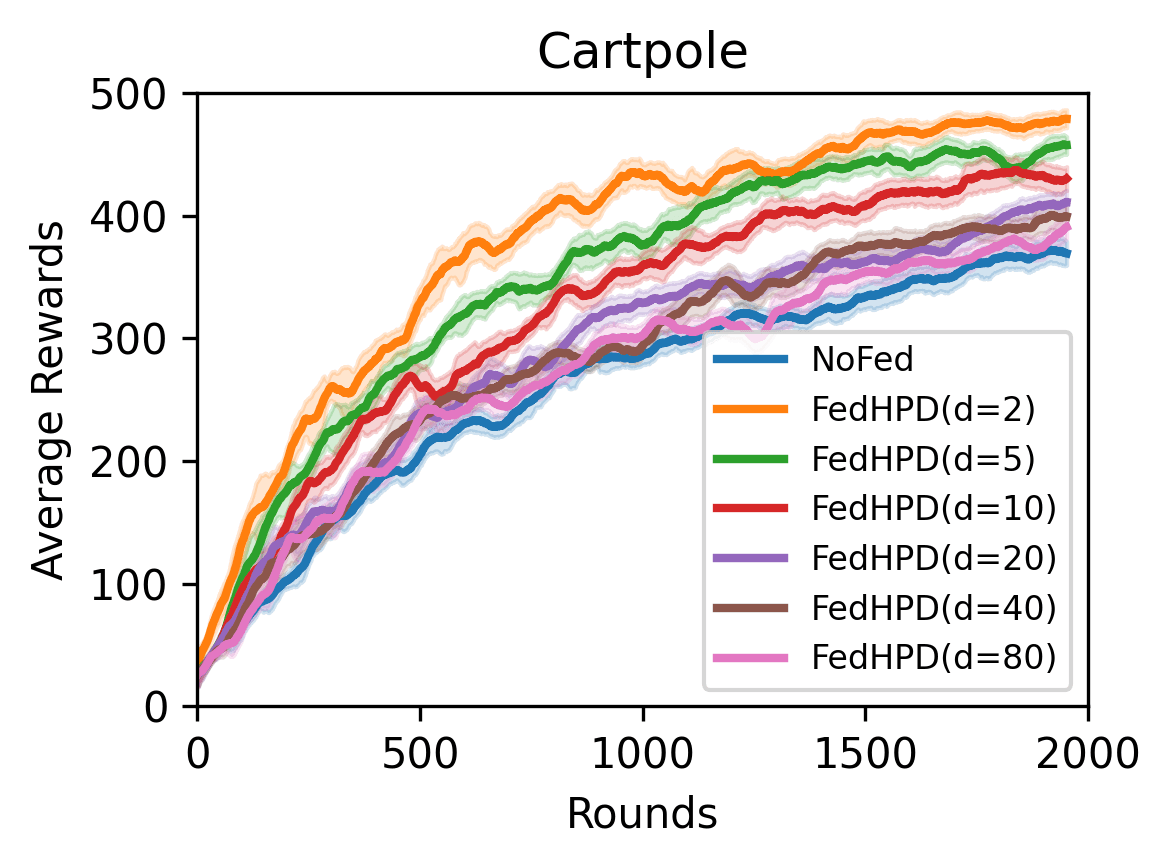}}
          % 加入对这列的图片说明
    \end{minipage}
    \begin{minipage}{0.49\linewidth}		\centerline{\includegraphics[width=\textwidth]{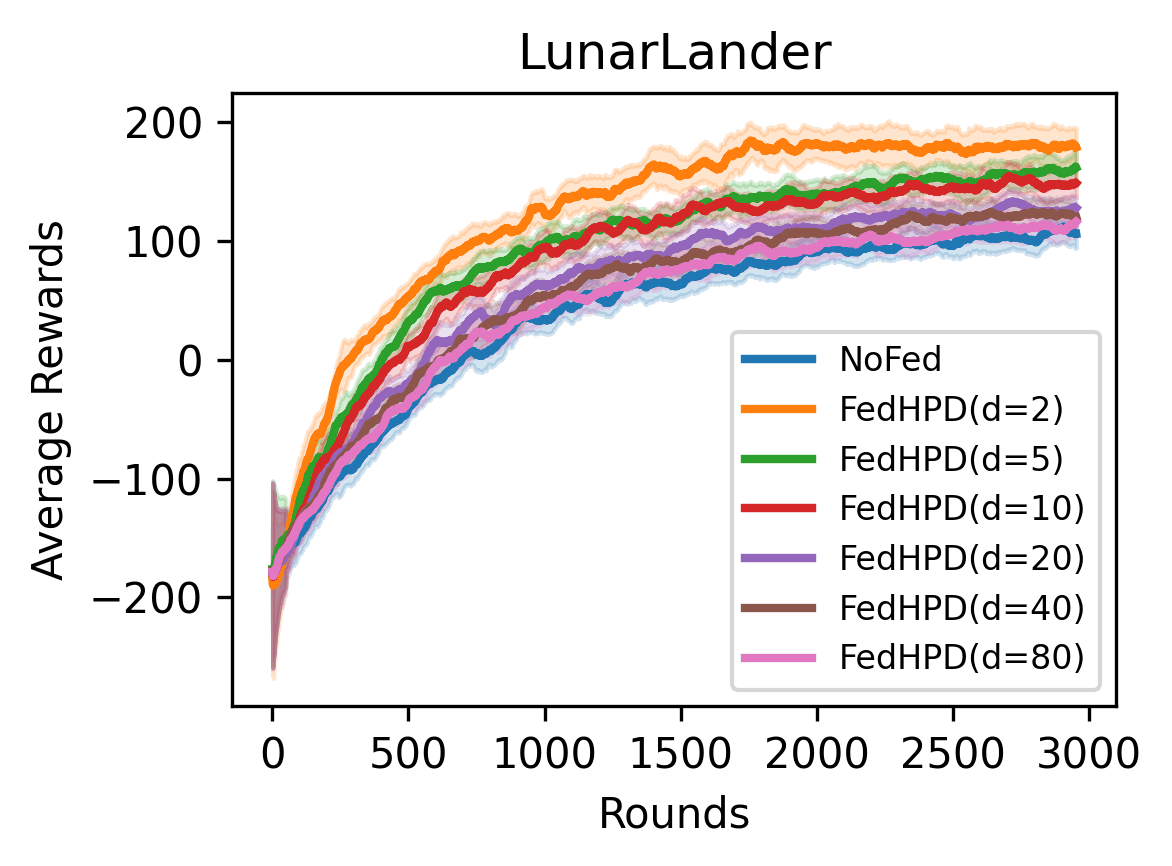}}
    \end{minipage}
    \vspace{-0.2cm}
\caption{Comparisons of system performance under different $d$ values ($d$ = 2, 5, 10, 20, 40, 80).}
\label{pic:system d explore}
\end{figure}

%Using Cartpole as a case study, 
We conduct a more in-depth analysis of how $d$ values affect the performance enhancement of FedHPD. 
We run FedHPD with six different distillation intervals ($d$ = 2, 5, 10, 20, 40, 80).
The average system performance is shown in Fig. \ref{pic:system d explore}.
As the $d$ value increases, the system's performance gradually decreases, approaching the system performance under NoFed, but still outperforming NoFed.
This validates Corollary \ref{fast convergence FedHPD}: the larger the $d$ value, the closer the sample sizes of FedHPD and REINFORCE (NoFed) become.
The analysis of individual agents is deferred to the Appendix \ref{Appendix: FedHPD under six d values}.
In light of the above analysis, we should choose an appropriate $d$ value according to practical requirements. 
If system-level optimization is desired, larger distillation interval can bring higher benefits. 
However, if the emphasis is on optimizing specific individuals, smaller interval is necessary to achieve significant improvements.

\section{Discussion}
\textbf{Diversity of Public Dataset.} In federated model distillation for supervised learning, the selection of public datasets is crucial, as it relates to the model's generalization ability \cite{ijcaisun2021}. 
When deploying model distillation in FedRL, public datasets are also utilized. Similar to unlabeled datasets, our public datasets only contain state information. The selection of public state sets in FedRL is not sensitive. We hypothesize that this is because RL is inherently an exploratory learning process, where agents can learn how to take appropriate actions in similar states through exploration, even if certain states do not appear in the public dataset. This generalization capability makes the model less sensitive to the selection of public state sets, as it can compensate for deficiencies in the public state set through exploration. Taking Cartpole as an example, we select different public state sets to run FedHPD, with specific details provided in Appendix~\ref{Appendix: FedHPD under different public state sets}.
Experiments demonstrate that FedHPD does not require rigorous selection of public dataset.
This aligns with our method of generating the offline state set $\mathcal{S}_p$, as it is only used for knowledge distillation and not for local training.

\textbf{RL algorithms for local training.}~In FedHPD, we implement the vanilla REINFORCE algorithm locally, which is an on-policy RL algorithm. However, REINFORCE may struggle with more complex tasks. In such cases, the policy gradient algorithm used for local training can be replaced with off-policy forms, using variance-reduced policy gradient methods \cite{pmlr-v80-papini18a,gargiani2022page} or Actor-Critic frameworks \cite{mnih2016asynchronous}. Given the local training does not conflict with the collaborative training, the KD process does not need to be altered with algorithm adjustment.

\textbf{Formation of global consensus.}~Global consensus is obtained by averaging the outputs of all agents on the public dataset in FedHPD, which is a common operation in KD.
However, in FedRL with agent heterogeneity, there may be significant differences in the learning progress of local agents. 
If we use average aggregation, it might lead the policy towards suboptimal solution.
In this case, if we could determine aggregation weights based on the quality of local policies, it might play an important role in extreme scenarios.

\section{Conclusion}
Most existing FedRL research assumes agents are homogeneous, which is a significant limitation in real-world scenarios. 
Departing from this limitation, this paper investigates heterogeneous FedRL, where agents are heterogeneous and their internal details are unknown.
We propose FedHPD, which effectively enhances the sampling efficiency of heterogeneous agents through multiple rounds of local training and periodic knowledge sharing.
Moreover, we theoretically analyze the fast convergence of FedHPD under standard assumptions.
Finally, we empirically validate the effectiveness of the proposed framework across various RL tasks.
%Theoretical analysis in the context of agent heterogeneity is challenging. 
This paper analyzes the convergence of FedHPD but does not provide specific sample complexity results. 
In other words, more rigorous theoretical guarantees under agent heterogeneity are worthy of investigation. 

\clearpage
%%% The next two lines define, first, the bibliography style to be 
%%% applied, and, second, the bibliography file to be used.
%\clearpage
\bibliographystyle{ACM-Reference-Format} 
\bibliography{sample}

%%% -*-BibTeX-*-
%%% Do NOT edit. File created by BibTeX with style
%%% ACM-Reference-Format-Journals [18-Jan-2012].

\begin{thebibliography}{52}

%%% ====================================================================
%%% NOTE TO THE USER: you can override these defaults by providing
%%% customized versions of any of these macros before the \bibliography
%%% command.  Each of them MUST provide its own final punctuation,
%%% except for \shownote{}, \showDOI{}, and \showURL{}.  The latter two
%%% do not use final punctuation, in order to avoid confusing it with
%%% the Web address.
%%%
%%% To suppress output of a particular field, define its macro to expand
%%% to an empty string, or better, \unskip, like this:
%%%
%%% \newcommand{\showDOI}[1]{\unskip}   % LaTeX syntax
%%%
%%% \def \showDOI #1{\unskip}           % plain TeX syntax
%%%
%%% ====================================================================

\ifx \showCODEN    \undefined \def \showCODEN     #1{\unskip}     \fi
\ifx \showDOI      \undefined \def \showDOI       #1{#1}\fi
\ifx \showISBNx    \undefined \def \showISBNx     #1{\unskip}     \fi
\ifx \showISBNxiii \undefined \def \showISBNxiii  #1{\unskip}     \fi
\ifx \showISSN     \undefined \def \showISSN      #1{\unskip}     \fi
\ifx \showLCCN     \undefined \def \showLCCN      #1{\unskip}     \fi
\ifx \shownote     \undefined \def \shownote      #1{#1}          \fi
\ifx \showarticletitle \undefined \def \showarticletitle #1{#1}   \fi
\ifx \showURL      \undefined \def \showURL       {\relax}        \fi
% The following commands are used for tagged output and should be
% invisible to TeX
\providecommand\bibfield[2]{#2}
\providecommand\bibinfo[2]{#2}
\providecommand\natexlab[1]{#1}
\providecommand\showeprint[2][]{arXiv:#2}

\bibitem[\protect\citeauthoryear{Allen-Zhu and Hazan}{Allen-Zhu and Hazan}{2016}]%
        {pmlr-v48-allen-zhua16}
\bibfield{author}{\bibinfo{person}{Zeyuan Allen-Zhu} {and} \bibinfo{person}{Elad Hazan}.} \bibinfo{year}{2016}\natexlab{}.
\newblock \showarticletitle{Variance Reduction for Faster Non-Convex Optimization}. In \bibinfo{booktitle}{\emph{ICML}}, Vol.~\bibinfo{volume}{48}. \bibinfo{pages}{699--707}.
\newblock


\bibitem[\protect\citeauthoryear{Baxter and Bartlett}{Baxter and Bartlett}{2001}]%
        {baxter2001infinite}
\bibfield{author}{\bibinfo{person}{Jonathan Baxter} {and} \bibinfo{person}{Peter~L Bartlett}.} \bibinfo{year}{2001}\natexlab{}.
\newblock \showarticletitle{Infinite-horizon policy-gradient estimation}.
\newblock \bibinfo{journal}{\emph{Journal of Artificial Intelligence Research}}  \bibinfo{volume}{15} (\bibinfo{year}{2001}), \bibinfo{pages}{319--350}.
\newblock


\bibitem[\protect\citeauthoryear{Brockman, Cheung, Pettersson, Schneider, Schulman, Tang, and Zaremba}{Brockman et~al\mbox{.}}{2016}]%
        {brockman2016openai}
\bibfield{author}{\bibinfo{person}{Greg Brockman}, \bibinfo{person}{Vicki Cheung}, \bibinfo{person}{Ludwig Pettersson}, \bibinfo{person}{Jonas Schneider}, \bibinfo{person}{John Schulman}, \bibinfo{person}{Jie Tang}, {and} \bibinfo{person}{Wojciech Zaremba}.} \bibinfo{year}{2016}\natexlab{}.
\newblock \showarticletitle{Openai gym}.
\newblock \bibinfo{journal}{\emph{arXiv preprint arXiv:1606.01540}} (\bibinfo{year}{2016}).
\newblock


\bibitem[\protect\citeauthoryear{Chai, Niu, Carrasco, Arvin, Yin, and Lennox}{Chai et~al\mbox{.}}{2022}]%
        {chai2022design}
\bibfield{author}{\bibinfo{person}{Runqi Chai}, \bibinfo{person}{Hanlin Niu}, \bibinfo{person}{Joaquin Carrasco}, \bibinfo{person}{Farshad Arvin}, \bibinfo{person}{Hujun Yin}, {and} \bibinfo{person}{Barry Lennox}.} \bibinfo{year}{2022}\natexlab{}.
\newblock \showarticletitle{Design and experimental validation of deep reinforcement learning-based fast trajectory planning and control for mobile robot in unknown environment}.
\newblock \bibinfo{journal}{\emph{IEEE TNNLS}} \bibinfo{volume}{35}, \bibinfo{number}{4} (\bibinfo{year}{2022}), \bibinfo{pages}{5778--5792}.
\newblock


\bibitem[\protect\citeauthoryear{Cuccu, Rolshoven, Vorpe, Cudr{\'e}-Mauroux, and Glasmachers}{Cuccu et~al\mbox{.}}{2022}]%
        {cuccu2022dibb}
\bibfield{author}{\bibinfo{person}{Giuseppe Cuccu}, \bibinfo{person}{Luca Rolshoven}, \bibinfo{person}{Fabien Vorpe}, \bibinfo{person}{Philippe Cudr{\'e}-Mauroux}, {and} \bibinfo{person}{Tobias Glasmachers}.} \bibinfo{year}{2022}\natexlab{}.
\newblock \showarticletitle{DiBB: distributing black-box optimization}. In \bibinfo{booktitle}{\emph{Genetic and Evolutionary Computation Conference}}. \bibinfo{pages}{341--349}.
\newblock


\bibitem[\protect\citeauthoryear{Dai, Fan, Tan, Hoang, Low, and Jaillet}{Dai et~al\mbox{.}}{2024}]%
        {DAI2024257}
\bibfield{author}{\bibinfo{person}{Zhongxiang Dai}, \bibinfo{person}{Flint~Xiaofeng Fan}, \bibinfo{person}{Cheston Tan}, \bibinfo{person}{Trong~Nghia Hoang}, \bibinfo{person}{Bryan Kian~Hsiang Low}, {and} \bibinfo{person}{Patrick Jaillet}.} \bibinfo{year}{2024}\natexlab{}.
\newblock \showarticletitle{Chapter 14 - Federated sequential decision making: Bayesian optimization, reinforcement learning, and beyond}.
\newblock In \bibinfo{booktitle}{\emph{Federated Learning}}. \bibinfo{publisher}{Academic Press}, \bibinfo{pages}{257--279}.
\newblock
\showISBNx{978-0-443-19037-7}
\urldef\tempurl%
\url{https://doi.org/10.1016/B978-0-44-319037-7.00023-5}
\showDOI{\tempurl}


\bibitem[\protect\citeauthoryear{Dosovitskiy, Ros, Codevilla, Lopez, and Koltun}{Dosovitskiy et~al\mbox{.}}{2017}]%
        {pmlr-v78-dosovitskiy17a}
\bibfield{author}{\bibinfo{person}{Alexey Dosovitskiy}, \bibinfo{person}{German Ros}, \bibinfo{person}{Felipe Codevilla}, \bibinfo{person}{Antonio Lopez}, {and} \bibinfo{person}{Vladlen Koltun}.} \bibinfo{year}{2017}\natexlab{}.
\newblock \showarticletitle{{CARLA}: {An} Open Urban Driving Simulator}. In \bibinfo{booktitle}{\emph{CoRL}}. \bibinfo{pages}{1--16}.
\newblock


\bibitem[\protect\citeauthoryear{Fan, Ma, Dai, Jing, Tan, and Low}{Fan et~al\mbox{.}}{2021}]%
        {fan2021fault}
\bibfield{author}{\bibinfo{person}{Flint~Xiaofeng Fan}, \bibinfo{person}{Yining Ma}, \bibinfo{person}{Zhongxiang Dai}, \bibinfo{person}{Wei Jing}, \bibinfo{person}{Cheston Tan}, {and} \bibinfo{person}{Bryan Kian~Hsiang Low}.} \bibinfo{year}{2021}\natexlab{}.
\newblock \showarticletitle{Fault-tolerant federated reinforcement learning with theoretical guarantee}. In \bibinfo{booktitle}{\emph{Advances in Neural Information Processing Systems}}, Vol.~\bibinfo{volume}{34}. \bibinfo{pages}{1007--1021}.
\newblock


\bibitem[\protect\citeauthoryear{Fan, Ma, Dai, Tan, and Low}{Fan et~al\mbox{.}}{2023}]%
        {FHQL2023}
\bibfield{author}{\bibinfo{person}{Flint~Xiaofeng Fan}, \bibinfo{person}{Yining Ma}, \bibinfo{person}{Zhongxiang Dai}, \bibinfo{person}{Cheston Tan}, {and} \bibinfo{person}{Bryan Kian~Hsiang Low}.} \bibinfo{year}{2023}\natexlab{}.
\newblock \showarticletitle{FedHQL: Federated Heterogeneous Q-Learning}. In \bibinfo{booktitle}{\emph{International Conference on Autonomous Agents and Multiagent Systems}}. \bibinfo{pages}{2810–2812}.
\newblock


\bibitem[\protect\citeauthoryear{Fan, Tan, Ong, Wattenhofer, and Ooi}{Fan et~al\mbox{.}}{2024}]%
        {fan2024fedrlhf}
\bibfield{author}{\bibinfo{person}{Flint~Xiaofeng Fan}, \bibinfo{person}{Cheston Tan}, \bibinfo{person}{Yew-Soon Ong}, \bibinfo{person}{Roger Wattenhofer}, {and} \bibinfo{person}{Wei-Tsang Ooi}.} \bibinfo{year}{2024}\natexlab{}.
\newblock \showarticletitle{FedRLHF: A Convergence-Guaranteed Federated Framework for Privacy-Preserving and Personalized RLHF}.
\newblock \bibinfo{journal}{\emph{arXiv preprint arXiv:2412.15538}} (\bibinfo{year}{2024}).
\newblock


\bibitem[\protect\citeauthoryear{Fatkhullin, Barakat, Kireeva, and He}{Fatkhullin et~al\mbox{.}}{2023}]%
        {pmlr-v202-fatkhullin23a}
\bibfield{author}{\bibinfo{person}{Ilyas Fatkhullin}, \bibinfo{person}{Anas Barakat}, \bibinfo{person}{Anastasia Kireeva}, {and} \bibinfo{person}{Niao He}.} \bibinfo{year}{2023}\natexlab{}.
\newblock \showarticletitle{Stochastic Policy Gradient Methods: Improved Sample Complexity for {F}isher-non-degenerate Policies}. In \bibinfo{booktitle}{\emph{ICML}}, Vol.~\bibinfo{volume}{202}. \bibinfo{pages}{9827--9869}.
\newblock


\bibitem[\protect\citeauthoryear{Feng, Liu, Huang, and Guo}{Feng et~al\mbox{.}}{2023a}]%
        {feng2023robust}
\bibfield{author}{\bibinfo{person}{Bin Feng}, \bibinfo{person}{Zhuping Liu}, \bibinfo{person}{Gang Huang}, {and} \bibinfo{person}{Chuangxin Guo}.} \bibinfo{year}{2023}\natexlab{a}.
\newblock \showarticletitle{Robust federated deep reinforcement learning for optimal control in multiple virtual power plants with electric vehicles}.
\newblock \bibinfo{journal}{\emph{Applied Energy}}  \bibinfo{volume}{349} (\bibinfo{year}{2023}), \bibinfo{pages}{121615}.
\newblock


\bibitem[\protect\citeauthoryear{Feng, Sun, Yan, Zhu, Zou, Shen, and Liu}{Feng et~al\mbox{.}}{2023b}]%
        {feng2023dense}
\bibfield{author}{\bibinfo{person}{Shuo Feng}, \bibinfo{person}{Haowei Sun}, \bibinfo{person}{Xintao Yan}, \bibinfo{person}{Haojie Zhu}, \bibinfo{person}{Zhengxia Zou}, \bibinfo{person}{Shengyin Shen}, {and} \bibinfo{person}{Henry~X Liu}.} \bibinfo{year}{2023}\natexlab{b}.
\newblock \showarticletitle{Dense reinforcement learning for safety validation of autonomous vehicles}.
\newblock \bibinfo{journal}{\emph{Nature}} \bibinfo{volume}{615}, \bibinfo{number}{7953} (\bibinfo{year}{2023}), \bibinfo{pages}{620--627}.
\newblock


\bibitem[\protect\citeauthoryear{Gargiani, Zanelli, Martinelli, Summers, and Lygeros}{Gargiani et~al\mbox{.}}{2022}]%
        {gargiani2022page}
\bibfield{author}{\bibinfo{person}{Matilde Gargiani}, \bibinfo{person}{Andrea Zanelli}, \bibinfo{person}{Andrea Martinelli}, \bibinfo{person}{Tyler Summers}, {and} \bibinfo{person}{John Lygeros}.} \bibinfo{year}{2022}\natexlab{}.
\newblock \showarticletitle{PAGE-PG: A simple and loopless variance-reduced policy gradient method with probabilistic gradient estimation}. In \bibinfo{booktitle}{\emph{ICML}}, Vol.~\bibinfo{volume}{162}. \bibinfo{pages}{7223--7240}.
\newblock


\bibitem[\protect\citeauthoryear{Goodfellow, Pouget-Abadie, Mirza, Xu, Warde-Farley, Ozair, Courville, and Bengio}{Goodfellow et~al\mbox{.}}{2020}]%
        {goodfellow2020generative}
\bibfield{author}{\bibinfo{person}{Ian Goodfellow}, \bibinfo{person}{Jean Pouget-Abadie}, \bibinfo{person}{Mehdi Mirza}, \bibinfo{person}{Bing Xu}, \bibinfo{person}{David Warde-Farley}, \bibinfo{person}{Sherjil Ozair}, \bibinfo{person}{Aaron Courville}, {and} \bibinfo{person}{Yoshua Bengio}.} \bibinfo{year}{2020}\natexlab{}.
\newblock \showarticletitle{Generative adversarial networks}.
\newblock \bibinfo{journal}{\emph{Commun. ACM}} \bibinfo{volume}{63}, \bibinfo{number}{11} (\bibinfo{year}{2020}), \bibinfo{pages}{139--144}.
\newblock


\bibitem[\protect\citeauthoryear{Hinton, Vinyals, and Dean}{Hinton et~al\mbox{.}}{2015}]%
        {hinton2015distilling}
\bibfield{author}{\bibinfo{person}{Geoffrey Hinton}, \bibinfo{person}{Oriol Vinyals}, {and} \bibinfo{person}{Jeff Dean}.} \bibinfo{year}{2015}\natexlab{}.
\newblock \showarticletitle{Distilling the knowledge in a neural network}.
\newblock \bibinfo{journal}{\emph{arXiv preprint arXiv:1503.02531}} (\bibinfo{year}{2015}).
\newblock


\bibitem[\protect\citeauthoryear{Jin, Feng, and Yu}{Jin et~al\mbox{.}}{2024}]%
        {10609408}
\bibfield{author}{\bibinfo{person}{Chenying Jin}, \bibinfo{person}{Xiang Feng}, {and} \bibinfo{person}{Huiqun Yu}.} \bibinfo{year}{2024}\natexlab{}.
\newblock \showarticletitle{Embracing Multiheterogeneity and Privacy Security Simultaneously: A Dynamic Privacy-Aware Federated Reinforcement Learning Approach}.
\newblock \bibinfo{journal}{\emph{IEEE TNNLS}} (\bibinfo{year}{2024}), \bibinfo{pages}{1--15}.
\newblock


\bibitem[\protect\citeauthoryear{Jin, Peng, Yang, Wang, and Zhang}{Jin et~al\mbox{.}}{2022}]%
        {jin2022federated}
\bibfield{author}{\bibinfo{person}{Hao Jin}, \bibinfo{person}{Yang Peng}, \bibinfo{person}{Wenhao Yang}, \bibinfo{person}{Shusen Wang}, {and} \bibinfo{person}{Zhihua Zhang}.} \bibinfo{year}{2022}\natexlab{}.
\newblock \showarticletitle{Federated reinforcement learning with environment heterogeneity}. In \bibinfo{booktitle}{\emph{International Conference on Artificial Intelligence and Statistics}}. \bibinfo{pages}{18--37}.
\newblock


\bibitem[\protect\citeauthoryear{Jordan, Gr\"{o}tschla, Fan, and Wattenhofer}{Jordan et~al\mbox{.}}{2024}]%
        {ETH.DECEN.BYZPG}
\bibfield{author}{\bibinfo{person}{Philip Jordan}, \bibinfo{person}{Florian Gr\"{o}tschla}, \bibinfo{person}{Flint~Xiaofeng Fan}, {and} \bibinfo{person}{Roger Wattenhofer}.} \bibinfo{year}{2024}\natexlab{}.
\newblock \showarticletitle{Decentralized Federated Policy Gradient with Byzantine Fault-Tolerance and Provably Fast Convergence}. In \bibinfo{booktitle}{\emph{International Conference on Autonomous Agents and Multiagent Systems}}. \bibinfo{pages}{964–972}.
\newblock


\bibitem[\protect\citeauthoryear{Ju, Juan, Gomez, Nakamura, and Li}{Ju et~al\mbox{.}}{2022}]%
        {ju2022transferring}
\bibfield{author}{\bibinfo{person}{Hao Ju}, \bibinfo{person}{Rongshun Juan}, \bibinfo{person}{Randy Gomez}, \bibinfo{person}{Keisuke Nakamura}, {and} \bibinfo{person}{Guangliang Li}.} \bibinfo{year}{2022}\natexlab{}.
\newblock \showarticletitle{Transferring policy of deep reinforcement learning from simulation to reality for robotics}.
\newblock \bibinfo{journal}{\emph{Nature Machine Intelligence}} \bibinfo{volume}{4}, \bibinfo{number}{12} (\bibinfo{year}{2022}), \bibinfo{pages}{1077--1087}.
\newblock


\bibitem[\protect\citeauthoryear{Khodadadian, Sharma, Joshi, and Maguluri}{Khodadadian et~al\mbox{.}}{2022}]%
        {khodadadian22a}
\bibfield{author}{\bibinfo{person}{Sajad Khodadadian}, \bibinfo{person}{Pranay Sharma}, \bibinfo{person}{Gauri Joshi}, {and} \bibinfo{person}{Siva~Theja Maguluri}.} \bibinfo{year}{2022}\natexlab{}.
\newblock \showarticletitle{Federated Reinforcement Learning: Linear Speedup Under {M}arkovian Sampling}. In \bibinfo{booktitle}{\emph{International Conference on Machine Learning}}, Vol.~\bibinfo{volume}{162}. \bibinfo{pages}{10997--11057}.
\newblock


\bibitem[\protect\citeauthoryear{Kingma}{Kingma}{2014}]%
        {kingma2014adam}
\bibfield{author}{\bibinfo{person}{Diederik~P Kingma}.} \bibinfo{year}{2014}\natexlab{}.
\newblock \showarticletitle{Adam: A method for stochastic optimization}.
\newblock \bibinfo{journal}{\emph{arXiv preprint arXiv:1412.6980}} (\bibinfo{year}{2014}).
\newblock


\bibitem[\protect\citeauthoryear{Li and Wang}{Li and Wang}{2019}]%
        {li2019fedmd}
\bibfield{author}{\bibinfo{person}{Daliang Li} {and} \bibinfo{person}{Junpu Wang}.} \bibinfo{year}{2019}\natexlab{}.
\newblock \showarticletitle{Fedmd: Heterogenous federated learning via model distillation}.
\newblock \bibinfo{journal}{\emph{arXiv preprint arXiv:1910.03581}} (\bibinfo{year}{2019}).
\newblock


\bibitem[\protect\citeauthoryear{Lin, Kong, Stich, and Jaggi}{Lin et~al\mbox{.}}{2020}]%
        {lin2020ensemble}
\bibfield{author}{\bibinfo{person}{Tao Lin}, \bibinfo{person}{Lingjing Kong}, \bibinfo{person}{Sebastian~U Stich}, {and} \bibinfo{person}{Martin Jaggi}.} \bibinfo{year}{2020}\natexlab{}.
\newblock \showarticletitle{Ensemble distillation for robust model fusion in federated learning}. In \bibinfo{booktitle}{\emph{NeurIPS 2020}}.
\newblock


\bibitem[\protect\citeauthoryear{Liu and Wang}{Liu and Wang}{2016}]%
        {NIPS2016_b3ba8f1b}
\bibfield{author}{\bibinfo{person}{Qiang Liu} {and} \bibinfo{person}{Dilin Wang}.} \bibinfo{year}{2016}\natexlab{}.
\newblock \showarticletitle{Stein Variational Gradient Descent: A General Purpose Bayesian Inference Algorithm}. In \bibinfo{booktitle}{\emph{NeurIPS}}, Vol.~\bibinfo{volume}{29}.
\newblock


\bibitem[\protect\citeauthoryear{Mai, Yao, Chen, Zhang, Cheung, and Han}{Mai et~al\mbox{.}}{2023}]%
        {mai2023server}
\bibfield{author}{\bibinfo{person}{Weiming Mai}, \bibinfo{person}{Jiangchao Yao}, \bibinfo{person}{Gong Chen}, \bibinfo{person}{Ya Zhang}, \bibinfo{person}{Yiu-Ming Cheung}, {and} \bibinfo{person}{Bo Han}.} \bibinfo{year}{2023}\natexlab{}.
\newblock \showarticletitle{Server-client collaborative distillation for federated reinforcement learning}.
\newblock \bibinfo{journal}{\emph{ACM Transactions on Knowledge Discovery from Data}} \bibinfo{volume}{18}, \bibinfo{number}{1} (\bibinfo{year}{2023}), \bibinfo{pages}{1--22}.
\newblock


\bibitem[\protect\citeauthoryear{Mak, Fan, Lanzend{\"o}rfer, Tan, Ooi, and Wattenhofer}{Mak et~al\mbox{.}}{2024}]%
        {mak2024caesar}
\bibfield{author}{\bibinfo{person}{Hei~Yi Mak}, \bibinfo{person}{Flint~Xiaofeng Fan}, \bibinfo{person}{Luca~A Lanzend{\"o}rfer}, \bibinfo{person}{Cheston Tan}, \bibinfo{person}{Wei~Tsang Ooi}, {and} \bibinfo{person}{Roger Wattenhofer}.} \bibinfo{year}{2024}\natexlab{}.
\newblock \showarticletitle{CAESAR: Enhancing Federated RL in Heterogeneous MDPs through Convergence-Aware Sampling with Screening}.
\newblock \bibinfo{journal}{\emph{arXiv preprint arXiv:2403.20156}} (\bibinfo{year}{2024}).
\newblock


\bibitem[\protect\citeauthoryear{McMahan, Moore, Ramage, Hampson, and y~Arcas}{McMahan et~al\mbox{.}}{2017}]%
        {mcmahan2017communication}
\bibfield{author}{\bibinfo{person}{Brendan McMahan}, \bibinfo{person}{Eider Moore}, \bibinfo{person}{Daniel Ramage}, \bibinfo{person}{Seth Hampson}, {and} \bibinfo{person}{Blaise~Aguera y Arcas}.} \bibinfo{year}{2017}\natexlab{}.
\newblock \showarticletitle{Communication-efficient learning of deep networks from decentralized data}. In \bibinfo{booktitle}{\emph{International Conference on Artificial Intelligence and Statistics}}. \bibinfo{pages}{1273--1282}.
\newblock


\bibitem[\protect\citeauthoryear{Mnih}{Mnih}{2016}]%
        {mnih2016asynchronous}
\bibfield{author}{\bibinfo{person}{Volodymyr Mnih}.} \bibinfo{year}{2016}\natexlab{}.
\newblock \showarticletitle{Asynchronous Methods for Deep Reinforcement Learning}.
\newblock \bibinfo{journal}{\emph{arXiv preprint arXiv:1602.01783}} (\bibinfo{year}{2016}).
\newblock


\bibitem[\protect\citeauthoryear{Pan, Wang, Zhang, Li, Yi, and Song}{Pan et~al\mbox{.}}{2019}]%
        {pan2019you}
\bibfield{author}{\bibinfo{person}{Xinlei Pan}, \bibinfo{person}{Weiyao Wang}, \bibinfo{person}{Xiaoshuai Zhang}, \bibinfo{person}{Bo Li}, \bibinfo{person}{Jinfeng Yi}, {and} \bibinfo{person}{Dawn Song}.} \bibinfo{year}{2019}\natexlab{}.
\newblock \showarticletitle{How You Act Tells a Lot: Privacy-Leaking Attack on Deep Reinforcement Learning.}. In \bibinfo{booktitle}{\emph{AAMAS 2019}}, Vol.~\bibinfo{volume}{19}. \bibinfo{pages}{368--376}.
\newblock


\bibitem[\protect\citeauthoryear{Papini, Binaghi, Canonaco, Pirotta, and Restelli}{Papini et~al\mbox{.}}{2018}]%
        {pmlr-v80-papini18a}
\bibfield{author}{\bibinfo{person}{Matteo Papini}, \bibinfo{person}{Damiano Binaghi}, \bibinfo{person}{Giuseppe Canonaco}, \bibinfo{person}{Matteo Pirotta}, {and} \bibinfo{person}{Marcello Restelli}.} \bibinfo{year}{2018}\natexlab{}.
\newblock \showarticletitle{Stochastic Variance-Reduced Policy Gradient}. In \bibinfo{booktitle}{\emph{International Conference on Machine Learning}}, Vol.~\bibinfo{volume}{80}. \bibinfo{pages}{4026--4035}.
\newblock


\bibitem[\protect\citeauthoryear{Qiao, Zhang, Yue, Yuan, Cai, Zhang, Ren, and Yu}{Qiao et~al\mbox{.}}{2024}]%
        {qiao2024br}
\bibfield{author}{\bibinfo{person}{Jing Qiao}, \bibinfo{person}{Zuyuan Zhang}, \bibinfo{person}{Sheng Yue}, \bibinfo{person}{Yuan Yuan}, \bibinfo{person}{Zhipeng Cai}, \bibinfo{person}{Xiao Zhang}, \bibinfo{person}{Ju Ren}, {and} \bibinfo{person}{Dongxiao Yu}.} \bibinfo{year}{2024}\natexlab{}.
\newblock \showarticletitle{BR-DeFedRL: Byzantine-Robust Decentralized Federated Reinforcement Learning with Fast Convergence and Communication Efficiency}. In \bibinfo{booktitle}{\emph{IEEE Conference on Computer Communications}}. IEEE, \bibinfo{pages}{141--150}.
\newblock


\bibitem[\protect\citeauthoryear{Reddi, Hefny, Sra, Poczos, and Smola}{Reddi et~al\mbox{.}}{2016}]%
        {pmlr-v48-reddi16}
\bibfield{author}{\bibinfo{person}{Sashank~J. Reddi}, \bibinfo{person}{Ahmed Hefny}, \bibinfo{person}{Suvrit Sra}, \bibinfo{person}{Barnabas Poczos}, {and} \bibinfo{person}{Alex Smola}.} \bibinfo{year}{2016}\natexlab{}.
\newblock \showarticletitle{Stochastic Variance Reduction for Nonconvex Optimization}. In \bibinfo{booktitle}{\emph{International Conference on Machine Learning}}, Vol.~\bibinfo{volume}{48}. \bibinfo{pages}{314--323}.
\newblock


\bibitem[\protect\citeauthoryear{Rusu, Colmenarejo, Gulcehre, Desjardins, Kirkpatrick, Pascanu, Mnih, Kavukcuoglu, and Hadsell}{Rusu et~al\mbox{.}}{2015}]%
        {rusu2015policy}
\bibfield{author}{\bibinfo{person}{Andrei~A Rusu}, \bibinfo{person}{Sergio~Gomez Colmenarejo}, \bibinfo{person}{Caglar Gulcehre}, \bibinfo{person}{Guillaume Desjardins}, \bibinfo{person}{James Kirkpatrick}, \bibinfo{person}{Razvan Pascanu}, \bibinfo{person}{Volodymyr Mnih}, \bibinfo{person}{Koray Kavukcuoglu}, {and} \bibinfo{person}{Raia Hadsell}.} \bibinfo{year}{2015}\natexlab{}.
\newblock \showarticletitle{Policy distillation}.
\newblock \bibinfo{journal}{\emph{arXiv preprint arXiv:1511.06295}} (\bibinfo{year}{2015}).
\newblock


\bibitem[\protect\citeauthoryear{Ryu and Takamaeda-Yamazaki}{Ryu and Takamaeda-Yamazaki}{2022}]%
        {ryu2022model}
\bibfield{author}{\bibinfo{person}{Sefutsu Ryu} {and} \bibinfo{person}{Shinya Takamaeda-Yamazaki}.} \bibinfo{year}{2022}\natexlab{}.
\newblock \showarticletitle{Model-based federated reinforcement distillation}. In \bibinfo{booktitle}{\emph{IEEE Global Communications Conference}}. \bibinfo{pages}{1109--1114}.
\newblock


\bibitem[\protect\citeauthoryear{Schulman, Levine, Abbeel, Jordan, and Moritz}{Schulman et~al\mbox{.}}{2015}]%
        {pmlr-v37-schulman15}
\bibfield{author}{\bibinfo{person}{John Schulman}, \bibinfo{person}{Sergey Levine}, \bibinfo{person}{Pieter Abbeel}, \bibinfo{person}{Michael Jordan}, {and} \bibinfo{person}{Philipp Moritz}.} \bibinfo{year}{2015}\natexlab{}.
\newblock \showarticletitle{Trust Region Policy Optimization}. In \bibinfo{booktitle}{\emph{International Conference on Machine Learning}}, Vol.~\bibinfo{volume}{37}. \bibinfo{pages}{1889--1897}.
\newblock


\bibitem[\protect\citeauthoryear{Schulman, Wolski, Dhariwal, Radford, and Klimov}{Schulman et~al\mbox{.}}{2017}]%
        {schulman2017proximal}
\bibfield{author}{\bibinfo{person}{John Schulman}, \bibinfo{person}{Filip Wolski}, \bibinfo{person}{Prafulla Dhariwal}, \bibinfo{person}{Alec Radford}, {and} \bibinfo{person}{Oleg Klimov}.} \bibinfo{year}{2017}\natexlab{}.
\newblock \showarticletitle{Proximal policy optimization algorithms}.
\newblock \bibinfo{journal}{\emph{arXiv preprint arXiv:1707.06347}} (\bibinfo{year}{2017}).
\newblock


\bibitem[\protect\citeauthoryear{Silver, Lever, Heess, Degris, Wierstra, and Riedmiller}{Silver et~al\mbox{.}}{2014}]%
        {silver2014deterministic}
\bibfield{author}{\bibinfo{person}{David Silver}, \bibinfo{person}{Guy Lever}, \bibinfo{person}{Nicolas Heess}, \bibinfo{person}{Thomas Degris}, \bibinfo{person}{Daan Wierstra}, {and} \bibinfo{person}{Martin Riedmiller}.} \bibinfo{year}{2014}\natexlab{}.
\newblock \showarticletitle{Deterministic policy gradient algorithms}. In \bibinfo{booktitle}{\emph{International Conference on Machine Learning}}. \bibinfo{pages}{387--395}.
\newblock


\bibitem[\protect\citeauthoryear{Sun and Lyu}{Sun and Lyu}{2021}]%
        {ijcaisun2021}
\bibfield{author}{\bibinfo{person}{Lichao Sun} {and} \bibinfo{person}{Lingjuan Lyu}.} \bibinfo{year}{2021}\natexlab{}.
\newblock \showarticletitle{Federated Model Distillation with Noise-Free Differential Privacy}. In \bibinfo{booktitle}{\emph{IJCAI}}. \bibinfo{pages}{1563--1570}.
\newblock


\bibitem[\protect\citeauthoryear{Sutton}{Sutton}{2018}]%
        {sutton2018reinforcement}
\bibfield{author}{\bibinfo{person}{Richard~S Sutton}.} \bibinfo{year}{2018}\natexlab{}.
\newblock \showarticletitle{Reinforcement learning: An introduction}.
\newblock \bibinfo{journal}{\emph{A Bradford Book}} (\bibinfo{year}{2018}).
\newblock


\bibitem[\protect\citeauthoryear{Touati, Zhang, Pineau, and Vincent}{Touati et~al\mbox{.}}{2020}]%
        {pmlr-v124-touati20a}
\bibfield{author}{\bibinfo{person}{Ahmed Touati}, \bibinfo{person}{Amy Zhang}, \bibinfo{person}{Joelle Pineau}, {and} \bibinfo{person}{Pascal Vincent}.} \bibinfo{year}{2020}\natexlab{}.
\newblock \showarticletitle{Stable Policy Optimization via Off-Policy Divergence Regularization}. In \bibinfo{booktitle}{\emph{Conference on Uncertainty in Artificial Intelligence}}, Vol.~\bibinfo{volume}{124}. \bibinfo{pages}{1328--1337}.
\newblock


\bibitem[\protect\citeauthoryear{Wang, Li, Xiong, and Zhang}{Wang et~al\mbox{.}}{2019}]%
        {NEURIPS2019_Divergence-Augmented}
\bibfield{author}{\bibinfo{person}{Qing Wang}, \bibinfo{person}{Yingru Li}, \bibinfo{person}{Jiechao Xiong}, {and} \bibinfo{person}{Tong Zhang}.} \bibinfo{year}{2019}\natexlab{}.
\newblock \showarticletitle{Divergence-Augmented Policy Optimization}. In \bibinfo{booktitle}{\emph{Advances in Neural Information Processing Systems}}, Vol.~\bibinfo{volume}{32}.
\newblock


\bibitem[\protect\citeauthoryear{Williams}{Williams}{1992}]%
        {williams1992simple}
\bibfield{author}{\bibinfo{person}{Ronald~J Williams}.} \bibinfo{year}{1992}\natexlab{}.
\newblock \showarticletitle{Simple statistical gradient-following algorithms for connectionist reinforcement learning}.
\newblock \bibinfo{journal}{\emph{Machine Learning}}  \bibinfo{volume}{8} (\bibinfo{year}{1992}), \bibinfo{pages}{229--256}.
\newblock


\bibitem[\protect\citeauthoryear{Woo, Joshi, and Chi}{Woo et~al\mbox{.}}{2023}]%
        {woo2023blessing}
\bibfield{author}{\bibinfo{person}{Jiin Woo}, \bibinfo{person}{Gauri Joshi}, {and} \bibinfo{person}{Yuejie Chi}.} \bibinfo{year}{2023}\natexlab{}.
\newblock \showarticletitle{The blessing of heterogeneity in federated q-learning: Linear speedup and beyond}. In \bibinfo{booktitle}{\emph{International Conference on Machine Learning}}. \bibinfo{pages}{37157--37216}.
\newblock


\bibitem[\protect\citeauthoryear{Xin, Karatzoglou, Arapakis, and Jose}{Xin et~al\mbox{.}}{2022}]%
        {xin2022supervised}
\bibfield{author}{\bibinfo{person}{Xin Xin}, \bibinfo{person}{Alexandros Karatzoglou}, \bibinfo{person}{Ioannis Arapakis}, {and} \bibinfo{person}{Joemon~M Jose}.} \bibinfo{year}{2022}\natexlab{}.
\newblock \showarticletitle{Supervised advantage actor-critic for recommender systems}. In \bibinfo{booktitle}{\emph{International Conference on Web Search and Data Mining}}. \bibinfo{pages}{1186--1196}.
\newblock


\bibitem[\protect\citeauthoryear{Xu, Gao, and Gu}{Xu et~al\mbox{.}}{2020}]%
        {xu20a_uai}
\bibfield{author}{\bibinfo{person}{Pan Xu}, \bibinfo{person}{Felicia Gao}, {and} \bibinfo{person}{Quanquan Gu}.} \bibinfo{year}{2020}\natexlab{}.
\newblock \showarticletitle{An Improved Convergence Analysis of Stochastic Variance-Reduced Policy Gradient}. In \bibinfo{booktitle}{\emph{Conference on Uncertainty in Artificial Intelligence}}, Vol.~\bibinfo{volume}{115}. \bibinfo{pages}{541--551}.
\newblock


\bibitem[\protect\citeauthoryear{Ye, Fang, Du, Yuen, and Tao}{Ye et~al\mbox{.}}{2023}]%
        {ye2023heterogeneous}
\bibfield{author}{\bibinfo{person}{Mang Ye}, \bibinfo{person}{Xiuwen Fang}, \bibinfo{person}{Bo Du}, \bibinfo{person}{Pong~C Yuen}, {and} \bibinfo{person}{Dacheng Tao}.} \bibinfo{year}{2023}\natexlab{}.
\newblock \showarticletitle{Heterogeneous federated learning: State-of-the-art and research challenges}.
\newblock \bibinfo{journal}{\emph{Comput. Surveys}} \bibinfo{volume}{56}, \bibinfo{number}{3} (\bibinfo{year}{2023}), \bibinfo{pages}{1--44}.
\newblock


\bibitem[\protect\citeauthoryear{Yu}{Yu}{2018}]%
        {yu2018towards}
\bibfield{author}{\bibinfo{person}{Yang Yu}.} \bibinfo{year}{2018}\natexlab{}.
\newblock \showarticletitle{Towards sample efficient reinforcement learning}. In \bibinfo{booktitle}{\emph{International Joint Conference on Artificial Intelligence}}. \bibinfo{pages}{5739--5743}.
\newblock


\bibitem[\protect\citeauthoryear{Yuan, Gower, and Lazaric}{Yuan et~al\mbox{.}}{2022}]%
        {pmlr-v151-yuan22a}
\bibfield{author}{\bibinfo{person}{Rui Yuan}, \bibinfo{person}{Robert~M. Gower}, {and} \bibinfo{person}{Alessandro Lazaric}.} \bibinfo{year}{2022}\natexlab{}.
\newblock \showarticletitle{A general sample complexity analysis of vanilla policy gradient}. In \bibinfo{booktitle}{\emph{International Conference on Artificial Intelligence and Statistics}}, Vol.~\bibinfo{volume}{151}. \bibinfo{pages}{3332--3380}.
\newblock


\bibitem[\protect\citeauthoryear{Zhang, Wang, Mitra, and Anderson}{Zhang et~al\mbox{.}}{2024}]%
        {zhang2024fedsarsa}
\bibfield{author}{\bibinfo{person}{Chenyu Zhang}, \bibinfo{person}{Han Wang}, \bibinfo{person}{Aritra Mitra}, {and} \bibinfo{person}{James Anderson}.} \bibinfo{year}{2024}\natexlab{}.
\newblock \showarticletitle{Finite-Time Analysis of On-Policy Heterogeneous Federated Reinforcement Learning}. In \bibinfo{booktitle}{\emph{International Conference on Learning Representations}}.
\newblock


\bibitem[\protect\citeauthoryear{Zheng, Gao, Xue, and Yang}{Zheng et~al\mbox{.}}{2024}]%
        {zheng2024federated}
\bibfield{author}{\bibinfo{person}{Zhong Zheng}, \bibinfo{person}{Fengyu Gao}, \bibinfo{person}{Lingzhou Xue}, {and} \bibinfo{person}{Jing Yang}.} \bibinfo{year}{2024}\natexlab{}.
\newblock \showarticletitle{Federated Q-Learning: Linear Regret Speedup with Low Communication Cost}. In \bibinfo{booktitle}{\emph{International Conference on Learning Representations}}.
\newblock


\bibitem[\protect\citeauthoryear{Zhu, Hong, and Zhou}{Zhu et~al\mbox{.}}{2021}]%
        {zhu2021data}
\bibfield{author}{\bibinfo{person}{Zhuangdi Zhu}, \bibinfo{person}{Junyuan Hong}, {and} \bibinfo{person}{Jiayu Zhou}.} \bibinfo{year}{2021}\natexlab{}.
\newblock \showarticletitle{Data-free knowledge distillation for heterogeneous federated learning}. In \bibinfo{booktitle}{\emph{International Conference on Machine Learning}}. \bibinfo{pages}{12878--12889}.
\newblock


\end{thebibliography}

%%%%%%%%%%%%%%%%%%%%%%%%%%%%%%%%%%%%%%%%%%%%%%%%%%%%%%%%%%%%%%%%%%%%%%%%

\appendix
\onecolumn

\section*{\huge APPENDIX}

\section{Proof}

\subsection{Proof of Lemma \ref{D_KL smooth}}\label{proof of lemma 1}
\begin{proof}
For different tasks, we use different distributions. Therefore, we need to separately prove the $L$-smoothness of the KL divergence term under the Softmax and Gaussian distributions:

\textit{1. $L$-smoothness of KL Divergence term under Softmax Distribution.}

Let $\pi_{\theta_k}(a|\mathcal{S}_p)$ be a Softmax distribution parameterized by $\theta_k$:
\begin{equation}
\pi_{\theta_k}(a|\mathcal{S}_p)=\frac{e^{\theta_k^Tf(a,\mathcal{S}_p)}}{\sum_je^{\theta_k^Tf(a_j,\mathcal{S}_p)}}.
\end{equation}

The KL divergence between $\pi_{\theta_k}$ and $\pi_{global}$ is:
\begin{equation}
D_{\mathrm{KL}}(\pi_{\theta_k}\|\pi_{global})=\sum_a\pi_{\theta_k}(a|\mathcal{S}_p)\log\frac{\pi_{\theta_k}(a|\mathcal{S}_p)}{\pi_{global}(a|\mathcal{S}_p)} .
\end{equation}

The gradient of KL divergence with respect to $\theta_k$ can be derived as follows:

First, compute the gradient of $\pi_{\theta_k}(a|\mathcal{S}_p)$ with respect to $\theta_k$:
\begin{align}
\nabla_{\theta_k}\pi_{\theta_k}(a|\mathcal{S}_p)=\pi_{\theta_k}(a|\mathcal{S}_p)\left(f(a,\mathcal{S}_p)-\sum_j\pi_{\theta_k}(a_j|\mathcal{S}_p)f(a_j,\mathcal{S}_p)\right).  
\end{align}

Next, compute the gradient of $D_{\text{KL}}(\pi_{\theta_k} \| \pi_{global})$:
\begin{align}
\nabla_{\theta_k}D_{\text{KL}}(\pi_{\theta_k} \| \pi_{global})=\sum_{a}(\nabla_{\theta_{k}}\pi_{\theta_{k}}(a|\mathcal{S}_{p})\log\frac{\pi_{\theta_{k}}(a|\mathcal{S}_{p})}{\pi_{global}(a|\mathcal{S}_{p})}+\pi_{\theta_{k}}(a|\mathcal{S}_{p})\nabla_{\theta_{k}}\log\frac{\pi_{\theta_{k}}(a|\mathcal{S}_{p})}{\pi_{global}(a|\mathcal{S}_{p})}) . 
\end{align}

Since $\nabla_{\theta_k} \log \frac{\pi_{\theta_k}(a|\mathcal{S}_p)}{\pi_{global}(a|\mathcal{S}_p)} = \frac{\nabla_{\theta_k} \pi_{\theta_k}(a|\mathcal{S}_p)}{\pi_{\theta_k}(a|\mathcal{S}_p)}$, it simplifies to:
\begin{align}
\nabla_{\theta_k}D_{\mathrm{KL}}(\pi_{\theta_k}\|\pi_{global})=\sum_a\left(f(a,\mathcal{S}_p)-\sum_j\pi_{\theta_k}(a_j|\mathcal{S}_p)f(a_j,\mathcal{S}_p)\right)\pi_{\theta_k}(a|\mathcal{S}_p). 
\end{align}

$\nabla_{\theta_k} D_{\text{KL}}(\pi_{\theta_k} \| \pi_{global})$ is Lipschitz continuous which means $\|\nabla_{\theta_k} D_{\text{KL}}(\pi_{\theta_k} \| \pi_{global}) - \nabla_{\theta_{k'}} D_{\text{KL}}(\pi_{\theta_{k'}} \| \pi_{global})\|$ is to be bounded. 

Let $\nabla_{\theta_k} D_{\text{KL}}(\pi_{\theta_k} \| \pi_{global})$ and $\nabla_{\theta_{k'}} D_{\text{KL}}(\pi_{\theta_{k'}} \| \pi_{global})$ be denoted by $\nabla_{\theta_k}$ and $\nabla_{\theta_{k'}}$, respectively. 
The difference can be written as:
\begin{align}
\|\nabla_{\theta_k}-\nabla_{\theta_{k^{\prime}}}\|=\|\sum_a\left(\pi_{\theta_k}(a|\mathcal{S}_p)-\pi_{\theta_{k^{\prime}}}(a|\mathcal{S}_p)\right)\left(f(a,\mathcal{S}_p)-\sum_j\pi_{\theta_k}(a_j|\mathcal{S}_p)f(a_j,\mathcal{S}_p)\right)\|. 
\end{align}

By the smoothness of the Softmax function and its gradient, there exists a constant $L_{KL}$ such that:
\begin{equation}
\|\nabla_{\theta_k} - \nabla_{\theta_{k'}}\| \leq L_{KL} \|\theta_k - \theta_{k'}\|,  
\end{equation}
where $L_{KL}$ is a constant dependent on the parameters of the Softmax function and the feature space.

\textit{2. $L$-smoothness of KL Divergence term under Gaussian Distribution.}

Let $\pi_{\theta_k}(a|\mathcal{S}_p)$ be the probability density function of a Gaussian distribution with mean $\mu_k(\theta)$ and variance $\sigma^2_k(\theta)$ for each component $k$:
\begin{equation}
\pi_{\theta_k}(a|\mathcal{S}_p) = \frac{1}{\sqrt{2 \pi \sigma^2(\theta_k)}} e^{\left(-\frac{(a - \mu(\theta_k))^2}{2 \sigma^2(\theta_k)}\right)}.     
\end{equation}

The global distribution $\pi_{global}(a|\mathcal{S}_p)$ is the average of these Gaussian components:
\begin{equation}
\pi_{global}(a|\mathcal{S}_p) = \frac{1}{K} \sum_{k=1}^K \pi_{\theta_k}(a|\mathcal{S}_p).   
\end{equation}

The KL divergence between $\pi_{\theta_k}(a|\mathcal{S}_p)$ and $\pi_{global}(a|\mathcal{S}_p)$ is given by:
\begin{equation}
D_{\text{KL}}(\pi_{\theta_k} \| \pi_{global}) = \int_{-\infty}^{\infty} \pi_{\theta_k}(a|\mathcal{S}_p) \log \frac{\pi_{\theta_k}(a|\mathcal{S}_p)}{\pi_{global}(a|\mathcal{S}_p)} \, da. 
\end{equation}

The KL divergence between two univariate Gaussian distributions $\mathcal{N}(\mu_1, \sigma_1^2)$ and $\mathcal{N}(\mu_2, \sigma_2^2)$ has a well-known closed form:
\begin{equation}\label{eq: KL divergence between two univariate Gaussian distributions}
D_{\mathrm{KL}}(\mathcal{N}(\mu_{1},\sigma_{1}^{2})\|\mathcal{N}(\mu_{2},\sigma_{2}^{2}))=\frac{1}{2}\left(\frac{\sigma_{1}^{2}}{\sigma_{2}^{2}}+\frac{(\mu_{2}-\mu_{1})^{2}}{\sigma_{2}^{2}}-1+\log\frac{\sigma_{2}^{2}}{\sigma_{1}^{2}}\right).
\end{equation}

Integrating over $a$ and using Eq.~(\ref{eq: KL divergence between two univariate Gaussian distributions}), we obtain:
\begin{align}
D_{\text{KL}}(\pi_{\theta_k} \| \pi_{global}) = \frac12[\frac1K\sum_{j=1}^K\frac{\sigma^2(\theta_j)}{\sigma^2(\theta_k)}+\frac{\left(\mu(\theta_k)-\frac1K\sum_{j=1}^K\mu(\theta_j)\right)^2}{\frac1K\sum_{j=1}^K\sigma^2(\theta_j)}-1-\log\frac{\sigma^2(\theta_k)}{\frac1K\sum_{j=1}^K\sigma^2(\theta_j)}].  
\end{align}

The gradient of $D_{\text{KL}}(\pi_{\theta_k} \| \pi_{global})$ with respect to $\theta_k$ is:
\begin{align}
\nabla_{\theta_k}D_{\text{KL}}(\pi_{\theta_k} \| \pi_{global})&=\frac{1}{2}[-\frac{1}{K}\sum_{j=1}^{K}\frac{\sigma^{2}(\theta_{j})}{\sigma^{4}(\theta_{k})} \nabla_{\theta_{k}}\sigma^{2}(\theta_{k})\notag\\&+\frac{2(\mu(\theta_k)-\frac{1}{K}\sum_{j=1}^{K}\mu(\theta_{j}))\nabla_{\theta_{k}}\mu(\theta_{k})\cdot\frac{1}{K}\sum_{j=1}^{K}\sigma^{2}(\theta_{j})-(\mu(\theta_{k})-\frac{1}{K}\sum_{j=1}^{K}\mu(\theta_{j}))^{2}\frac{1}{K}\sum_{j=1}^{K}\nabla_{\theta_{k}}\sigma^{2}(\theta_{j})}{\left(\frac{1}{K}\sum_{j=1}^{K}\sigma^{2}(\theta_{j})\right)^{2}}\notag\\&-\frac1{\sigma^2(\theta_k)}\nabla_{\theta_k}\sigma^2(\theta_k).
\end{align}

Since $\mu(\theta_k)$ and $\sigma^2(\theta_k)$ are $L$-smooth functions, their gradients are Lipschitz continuous. Therefore, the gradient of the KL divergence, which is composed of these smooth functions, is also Lipschitz continuous.
Thus, the KL divergence term $D_{\text{KL}}(\pi_{\theta_k} \| \pi_{global})$ is $L$-smooth with respect to $\theta_k$, where $L$ is a Lipschitz constant derived from the smoothness of $\sigma^2(\theta_k)$ and $\mu(\theta_k)$.
 
\end{proof}

\subsection{Proof of Theorem \ref{theorem: convergence of PG with KD}}\label{proof: theorem 1}
\begin{proof}

The gradient of the objective function $J'({\theta_k})$ with respect to $\theta_k$ can be formulated as: 
\begin{equation}\label{new gradient}
\nabla_{\theta_k} J'({\theta_k}) = \nabla_{\theta_k} J({\theta_k})- \lambda~\nabla_{\theta_k} D_{\text{KL}}(\pi_{\theta_k}(a|\mathcal{S}_p) \| \pi_{global}(a|\mathcal{S}_p)).    
\end{equation}

First, we consider the smoothness of the original objective function $J({\theta_k})$. For the sake of brevity, we replace $\theta_k$ with $\theta$:
\begin{equation}
J(\theta)-J(\theta^{\prime})\geq\nabla J(\theta^{\prime})^\top(\theta-\theta^{\prime})-\frac {L_J}{2}\|\theta-\theta^{\prime}\|^2.  
\end{equation}

Similarly, the KL divergence term satisfies:
\begin{equation}
D_{{\mathrm{KL}}}(\pi_{\theta}\parallel\pi_{global})-D_{{\mathrm{KL}}}(\pi_{{\theta^{\prime}}}\parallel\pi_{global})\geq\nabla D_{{\mathrm{KL}}}(\theta^{\prime})^{\top}(\theta-\theta^{\prime})-\frac {L_{KL}}{2}\|\theta-\theta^{\prime}\|^2.
\end{equation}

If the new objective function $J'(\theta)$ satisfies $L$-smoothness, it needs to satisfy:
\begin{equation}\label{J' smooth}
\begin{aligned}
J^{\prime}(\theta)-J^{\prime}(\theta^{\prime})\geq\nabla J^{\prime}(\theta^{\prime})^\top(\theta-\theta^{\prime})-\frac{L}2\|\theta-\theta^{\prime}\|^2,
\end{aligned}
\end{equation}
where $L = L_J - \lambda L_{KL} > 0$.

Therefore, we need to conduct a detailed analysis of the upper bound of $L$:
\begin{align}\label{eq: upper bound of L_J}
L_J = \frac{\mathcal R_{\max}}{(1-\gamma)^2}\left(G^2+M\right),    
\end{align}
where $\mathcal R_{\max}$ denotes the upper bound of the single-step reward.
Eq.~(\ref{eq: upper bound of L_J}) represents the current state-of-the-art theoretical result (Lemma 4.4 in \cite{pmlr-v151-yuan22a}).

Taking the KL divergence under the softmax distribution as an example:
\begin{align}
\nabla_{\theta}D_{KL}(\pi_{\theta}||\pi_{global})& =\nabla_\theta\left[\sum_{s,a}\pi_\theta(a|s)\log\left(\frac{\pi_\theta(a|s)}{\pi_{global}(a|s)}\right)\right] \notag\\
&=\sum_{s,a}\left[\nabla_\theta\pi_\theta(a|s)(1+\log\left(\frac{\pi_\theta(a|s)}{\pi_{global}(a|s)}\right))+\pi_\theta(a|s)\nabla_\theta\log\pi_\theta(a|s)\right] \notag\\
&=\sum_{s,a}\left[\pi_\theta(a|s)\nabla_\theta\log\pi_\theta(a|s)(2+\log\biggl(\frac{\pi_\theta(a|s)}{\pi_{global}(a|s)}\biggr))\right].
\end{align}

According to the Cauchy-Schwarz inequality:
\begin{align}
|\nabla_{\theta}D_{KL}(\pi_{\theta}\|\pi_{global})|&\leq\sum_{s,a}\pi_{\theta}(a|s) |\nabla_{\theta}\log\pi_{\theta}(a|s)|\left|2+\log\biggl(\frac{\pi_{\theta}(a|s)}{\pi_{global}(a|s)}\biggr)\right| \notag\\
&\leq G\sum_{s,a}\pi_{\theta}(a|s)|\left|2+\log\biggl(\frac{\pi_{\theta}(a|s)}{\pi_{global}(a|s)}\biggr)\right|. 
\end{align}

Using Jensen's inequality:
\begin{equation}
\sum_{s,a}\pi_\theta(a|s)\log\biggl(\frac{\pi_\theta(a|s)}{\pi_{global}(a|s)}\biggr)\leq\log\biggl(\sum_{s,a}\pi_\theta(a|s)\left(\frac{\pi_\theta(a|s)}{\pi_{global}(a|s)}\right)\biggr).
\end{equation}

If we consider a worst-case scenario, assume that $\frac{\pi_\theta(a|s)}{\pi_{global}(a|s)}$ is bounded by some constant $C$ for any state-action pair $(s, a)$:
\begin{align}
\frac{\pi_\theta(a|s)}{\pi_{global}(a|s)} \leq C.  
\end{align}

Then we have:
\begin{equation}
\sum_{s,a} \pi_\theta(a|s) \left(\frac{\pi_\theta(a|s)}{\pi_{global}(a|s)}\right) \leq \sum_{s,a} \pi_\theta(a|s) \cdot C = C,     
\end{equation}
where $C$ can be related to the size of the state space $|S|$ or the action space $|A|$. 

To simplify the analysis, we can assume $C$ is a function of $|A|$:
\begin{equation}
\sum_{s,a} \pi_\theta(a|s) \left(\frac{\pi_\theta(a|s)}{\pi_{global}(a|s)}\right) \leq |A|.    
\end{equation}

Therefore, $L_{KL}$ can be expressed as:
\begin{equation}
L_{KL} = G (2 + \log(|A|)) .   
\end{equation}

Due to the influence of the reward and discount factor, $L_J$ is generally greater than $L_{KL}$. 
This implies that in long-term tasks (with $\gamma$ close to 1) and larger reward magnitudes (larger $\mathcal R_{\max}$), the condition for $L > 0$ is more easily satisfied.

The update of the objective function $J'(\theta)$ is:
\begin{equation}\label{update of of J'}
\theta^{i+1}=\theta^i+\alpha^i\nabla J'(\theta^i).    
\end{equation}

According to the definition of $L$-smoothness of the objective function $J'(\theta)$, we can estimate the change in the gradients between two points: 
\begin{align}\label{gradient change between 2 points}
J^{\prime}(\theta^{i+1})-J^{\prime}(\theta^{i})\geq\nabla J^{\prime}(\theta^{i})^\top(\theta^{i+1}-\theta^{i})-\frac{L}2\|\theta^{i+1}-\theta^{i}\|^2.
\end{align}

Combining Eq.~(\ref{update of of J'}) and Eq.~(\ref{gradient change between 2 points}), we have:
\begin{align}
J'(\theta^{i+1}) - J'(\theta^i)&\geq \alpha^i\|\nabla J'(\theta^i)\|^2-\frac{L}{2} {\alpha^i}^2\|\nabla J'(\theta^i)\|^2
\end{align}

Summing over $i$ from $1$ to $H-1$:
\begin{equation}
J^{\prime}(\theta^{H})-J^{\prime}(\theta^1)\geq\sum_{i=1}^{H-1}\left(\alpha^i-\frac{L{\alpha^i}^2}2\right)\|\nabla J^{\prime}(\theta^i)\|^2.    
\end{equation}

To ensure that $J'(\theta^i)$ is bounded, $\sum_{i=1}^{H-1}\left(\alpha^i-\frac{L{\alpha^i}^2}2\right)\|\nabla J^{\prime}(\theta^i)\|^2$ is need to be bounded.

By choosing $\alpha^i$ such that $\alpha^i = \frac{1}{i}$ or other suitable sequences satisfy Robbins-Monro conditions:
\begin{equation}\label{Robbins-Monro conditions}
\sum_{i=1}^{\infty} \alpha^i = \infty , \quad \sum_{i=1}^{\infty} {\alpha^i}^2 < \infty.    
\end{equation}

For sufficiently large $i$, $\alpha^i$ will be small enough such that $1 - \frac{L \alpha^i}{2}$ remains positive. Thus, $\alpha^i - \frac{L {\alpha^i}^2}{2}$ is positive.

The term $\|\nabla J'(\theta^i)\|^2$ must tend to zero as $H \to \infty$. Otherwise, the sum would diverge, contradicting the boundedness. 
Hence, as $\|\nabla J'(\theta^i)\|^2 \to 0$, it follows that $\nabla J'(\theta^i) \to 0$, which completes the proof. 
\end{proof}

%\subsection{Proof of Corollary \ref{fast convergence FedHPD}}\label{proof:corollary 1}

\subsection{Supplementary Proof of Corollary \ref{fast convergence FedHPD} (The value of $\lambda$)}\label{Discussion lamda value}
In this section, we discuss why FedHPD sets $\lambda = 1$. 

Let $\mathrm{Var}\left[\nabla_\theta J(\theta)\right]+\mathrm{Var}[\nabla_\theta D_{\mathrm{KL}}(\pi_\theta||\pi_{\mathrm{global}})]$ be $x$ and $\mathrm{Cov}(\nabla_\theta J(\theta),\nabla_\theta D_\mathrm{KL}(\pi_\theta||\pi_\mathrm{global}))$ be $y$. 
Eq.~(\ref{eq: variance of the new gradient}) can be reformulated as:
\begin{align}\label{eq: discussion of lamda}
\mathrm{Var}[\nabla_\theta J^{\prime}(\theta)]&=\mathrm{Var}\left[\nabla_\theta J(\theta)\right]+{\lambda^2}x-2\lambda y \notag\\
&=\mathrm{Var}\left[\nabla_\theta J(\theta)\right]+{\lambda^2}(x-y)+{\lambda^2}y-2\lambda y \notag\\
&=\mathrm{Var}\left[\nabla_\theta J(\theta)\right]+{\lambda^2}(x-y)+y[{(\lambda-1)}^2-1].
\end{align}

To accelerate the algorithm’s convergence, we need to reduce the variance of the gradient, so ${\lambda^2}(x-y)+y[{(\lambda-1)}^2-1]$ should be as negative as possible.
Consider the following three cases: 

\textit{Case 1} ($x > y$): 
Since $x > y$, ${\lambda^2}(x-y) > 0$. 
To make ${\lambda^2}(x-y)+y[{(\lambda-1)}^2-1] < 0$, we need to minimize the value of ${(\lambda-1)}^2-1$. 
Thus, when $\lambda = 1$, Eq.~(\ref{eq: discussion of lamda}) can be reformulated as:
\begin{equation}
\mathrm{Var}[\nabla_\theta J^{\prime}(\theta)]=\mathrm{Var}\left[\nabla_\theta J(\theta)\right]+x-2y.    
\end{equation}

If $y < x \leq 2y$, it can still reduce or maintain the original variance.

\textit{Case 2} ($x = y$): Since $x = y$, Eq.~(\ref{eq: discussion of lamda}) can be reformulated as:
\begin{equation}
\mathrm{Var}[\nabla_\theta J^{\prime}(\theta)]=\mathrm{Var}\left[\nabla_\theta J(\theta)\right]+y[{(\lambda-1)}^2-1].    
\end{equation}

To minimize the gradient variance as much as possible, we still keep $\lambda = 1$.

\textit{Case 3} ($x < y$): Consistent with Case 2, $\lambda = 1$ is the optimal choice for minimizing the gradient variance.

In conclusion, we choose $\lambda = 1$ in FedHPD.
\clearpage

\section{Experimental details}\label{Appendix:Configurations}
The hyperparameter settings for different tasks in the experiments are shown in Table~\ref{tab:Hyperparameters}. 
Depending on the task type and difficulty, the number of training rounds is adjusted accordingly. 
Additionally, we choose the size of the public state set to balance communication cost and training utility \cite{ijcaisun2021}.

\begin{table}[h]
    \caption{Hyperparameters Used in Experiments.}
    \label{tab:Hyperparameters}
    \centering 
	\begin{tabular}{cccccc}\toprule
		\textit{Hyperparameters} & \textit{Cartpole} & \textit{LunarLander} & \textit{InvertedPendulum} \\ \midrule
	    Policy function & \makecell{Categorical\\MLP} & \makecell{Categorical\\MLP} & \makecell{Gaussian\\MLP} \\
		Discount factor & 0.99 & 0.99 & 0.99\\
        Training rounds & 2000 & 3000 & 6000 \\
		State set size & 5000 & 10000 & 5000 \\
        \bottomrule
	\end{tabular}
\end{table}

In our experiments, we parameterize the policies of heterogeneous agents as neural networks. 
16$\times$16$\times$32 (tanh, tanh, tanh) represents a neural network with three hidden layers, with 16, 16, and 32 neurons in each layer respectively, using the tanh activation function. 
The complete configurations for the agents is shown in Tables \ref{tab:Experiment Cartpole Configurations}, \ref{tab:Experiment LunarLander Configurations}, \ref{tab:Experiment InvertedPendulum Configurations}.
We use the Adam \cite{kingma2014adam} optimizer throughout the training process.
Furthermore, the experiments are conducted on a 13th Gen Intel(R) Core(TM) i9-13900H processor with a clock speed of 2.60GHz and 32GB of memory.
On the specified hardware, for the CartPole, LunarLander, and InvertedPendulum tasks, each individual experiment can be completed in less than 4, 10, and 12 hours, respectively.

\begin{table}[h]
    \begin{minipage}{0.48\linewidth}
	\caption{Configurations of Agents in Cartpole}
	\label{tab:Experiment Cartpole Configurations}
	\begin{tabular}{rlllll}\toprule
		\textit{Agent id} & \textit{Network} & \textit{Learning Rate} \\ \midrule
		1 & 128 (relu) & 1e-3 \\
		2 & 32$\times$32 (relu, relu) & 2e-3 \\
        3 & 16$\times$16$\times$32 (tanh, tanh, tanh) & 4e-3 \\
		4 & 8$\times$8$\times$8 (relu, relu, relu) & 5e-4 \\
		5 & 32$\times$32$\times$32 (tanh, tanh, tanh) & 3e-3 \\
        6 & 8$\times$8 (relu, relu) & 7e-4 \\
		7 & 64$\times$64 (tanh, tanh) & 1e-3 \\
        8 & 16$\times$16 (relu, relu) & 5e-4 \\
		9 & 16$\times$32$\times$16 (tanh, tanh, tanh) & 5e-4 \\
		10 & 32 (relu) & 8e-4 \\ \bottomrule
	\end{tabular}
    \end{minipage}
    \begin{minipage}{0.48\linewidth}
	\caption{Configurations of Agents in LunarLander}
	\label{tab:Experiment LunarLander Configurations}
	\begin{tabular}{rlllll}\toprule
		\textit{Agent id} & \textit{Network} & \textit{Learning Rate} \\ \midrule
		1 & 128$\times$128$\times$256 (relu, relu, relu) & 5e-4 \\
		2 & 64$\times$64 (relu, relu) & 1e-3 \\
        3 & 128$\times$128 (tanh, tanh) & 4e-4 \\
		4 & 128$\times$256 (relu, relu) & 6e-4 \\
		5 & 256$\times$256 (tanh, tanh) & 3e-4 \\
        6 & 512 (relu) & 4e-4 \\
		7 & 64$\times$128$\times$64 (tanh, tanh, tanh) & 5e-4 \\
        8 & 32$\times$32 (relu, relu) & 2e-3 \\
		9 & 512$\times$512 (tanh, tanh) & 2e-4 \\
		10 & 1024 (relu) & 1e-4 \\ \bottomrule
	\end{tabular}
    \end{minipage}
\end{table}

\begin{table}[h]
	\caption{Configurations of Agents in InvertedPendulum}
	\label{tab:Experiment InvertedPendulum Configurations}
	\begin{tabular}{rlllll}\toprule
		\textit{Agent id} & \textit{Network} & \textit{Learning Rate} \\ \midrule
		1 & 16$\times$32 (tanh, tanh) & 1e-4 \\
		2 & 32$\times$32 (relu, relu) & 1e-4 \\
        3 & 64$\times$128 (tanh, relu) & 6e-5 \\
		4 & 128$\times$256 (relu, relu) & 1e-5 \\
		5 & 32$\times$64 (relu, tanh) & 1e-4 \\
        6 & 64$\times$64 (tanh, tanh) & 8e-5 \\
		7 & 128$\times$128 (relu, relu) & 4e-5 \\
        8 & 64$\times$32 (tanh, relu) & 7e-5 \\
		9 & 256$\times$128 (tanh, tanh) & 2e-5 \\
		10 & 32$\times$128 (relu, relu) & 5e-5 \\ \bottomrule
	\end{tabular}
\end{table}
\clearpage

\section{Additional Experiments}\label{Appendix:FuLL Experiments}
\subsection{Comparisons of individual performance improvement by FedHPD ($d = 5, 10, 20$)}\label{Appendix C.1}
\begin{figure*}[h]	
	\begin{minipage}{1\linewidth}
	\centerline{\includegraphics[width=\textwidth]{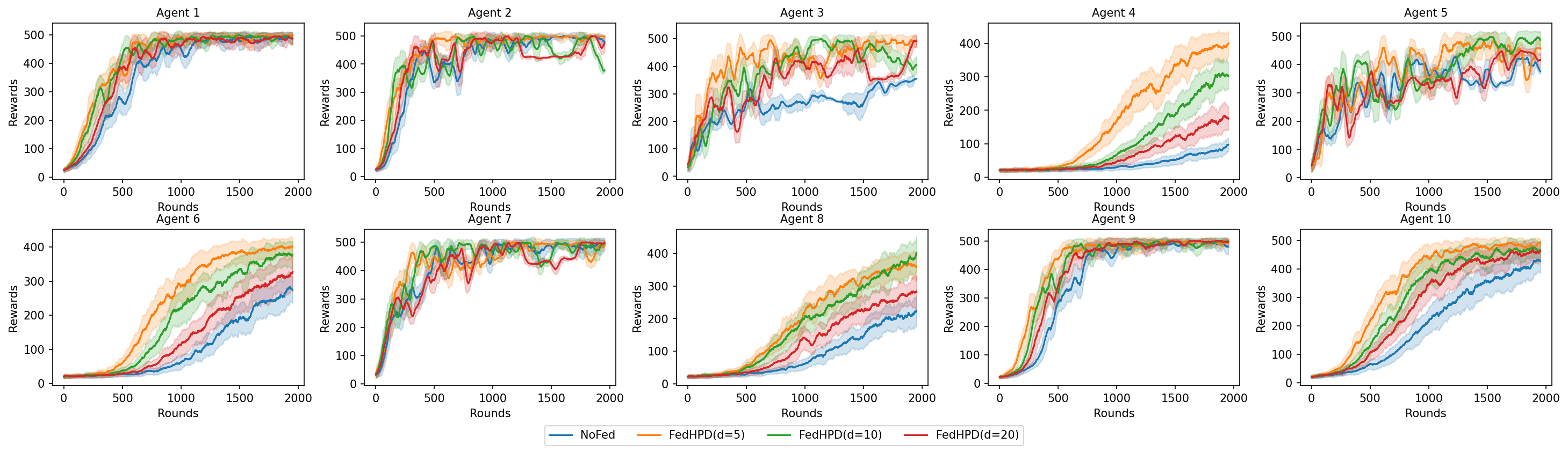}}	\centerline{\includegraphics[width=\textwidth]{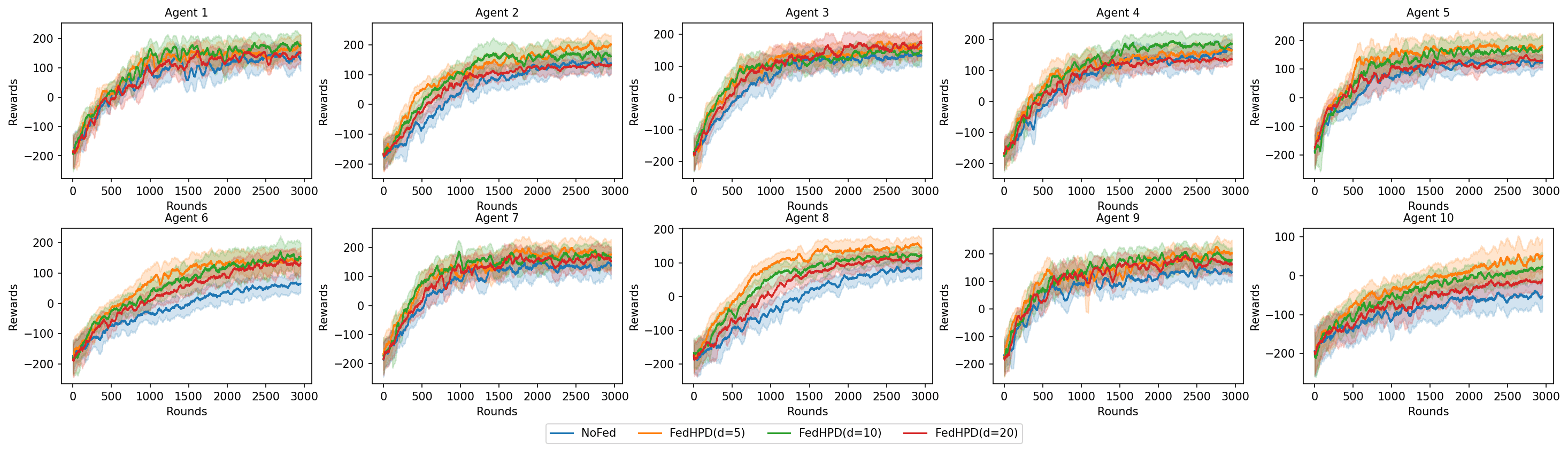}}
    \centerline{\includegraphics[width=\textwidth]{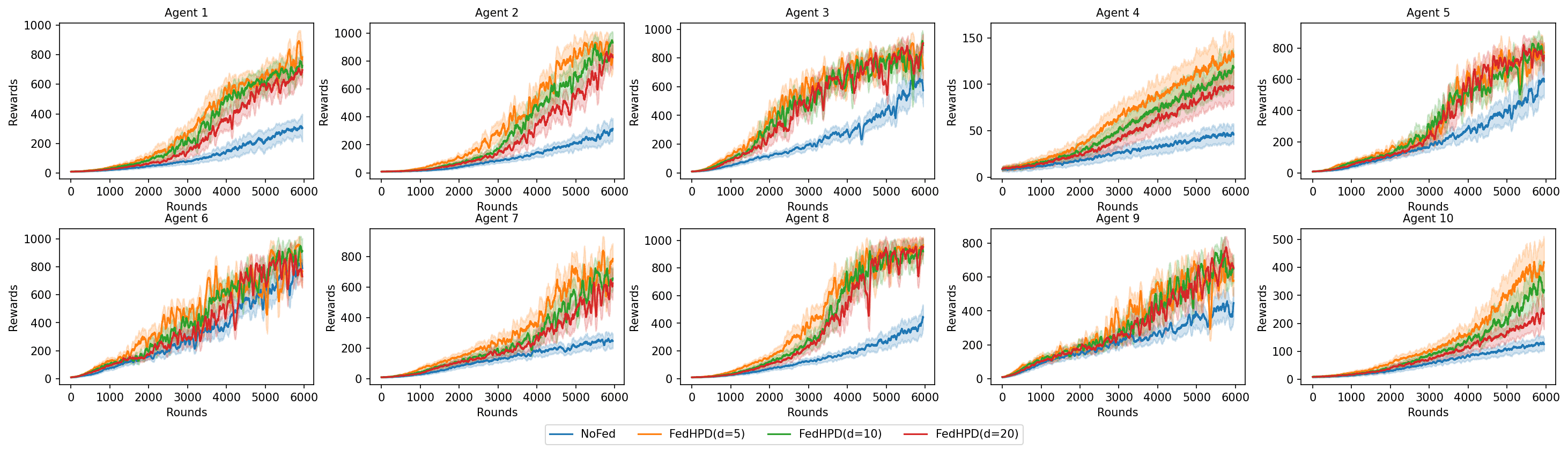}}
	\end{minipage}
	\caption{Complete comparisons of individual agents with different distillation intervals ($d$ = 5, 10 ,20).}
    \label{pic:complete individual}
\end{figure*} 

In Fig.~\ref{pic:complete individual}, we present the learning curves for all agents in the system, including performance comparisons between NoFed and FedHPD with different distillation interval ($d$ = 5, 10 ,20).
\clearpage
\subsection{Comparisons of individual performance between DPA-FedRL and FedHPD ($d$ = 5, 10, 20)}\label{Appendix: FedHPD baselines}
Fig. \ref{pic:Comparisons of individual agent baseline.} compares the individual improvement between DPA-FedRL and FedHPD.
DPA-FedRL can effectively maintain well-parameterized agents and is more stable compared to FedHPD.
However, for poorly parameterized or unstable agents, DPA-FedRL fails to improve their sample efficiency and may even cause them to converge to worse policies, which contradicts our problem objective.

\begin{figure}[h]	
	\begin{minipage}{1\linewidth}
	\centerline{\includegraphics[width=\textwidth]{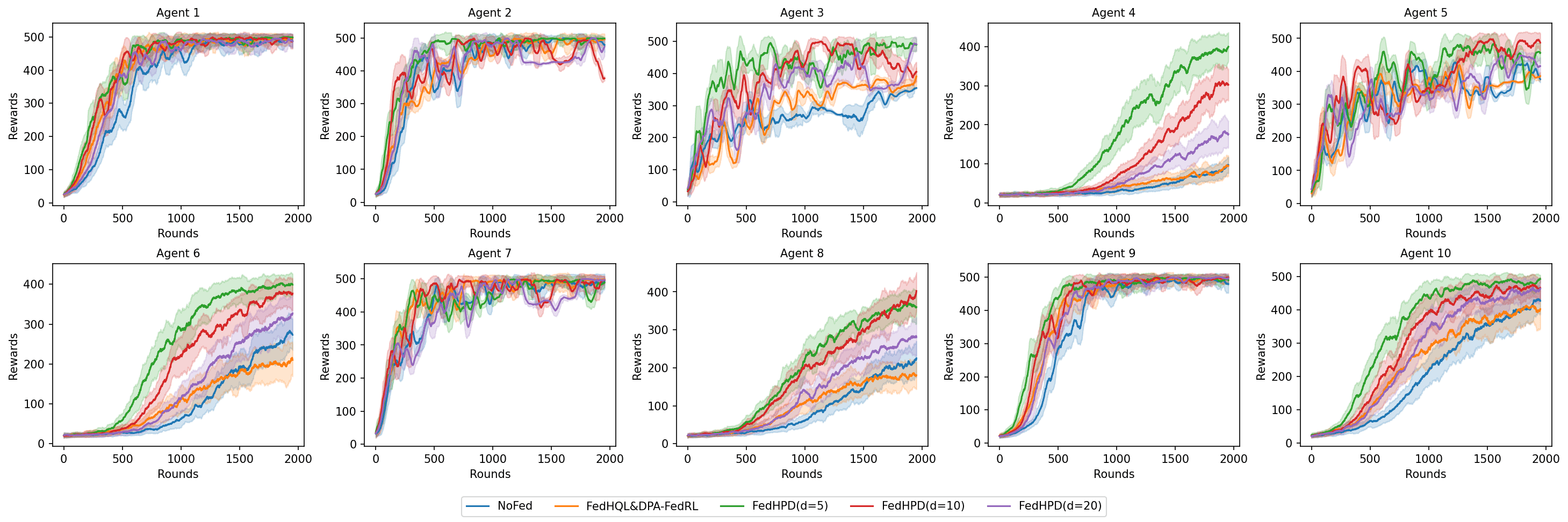}}	\centerline{\includegraphics[width=\textwidth]{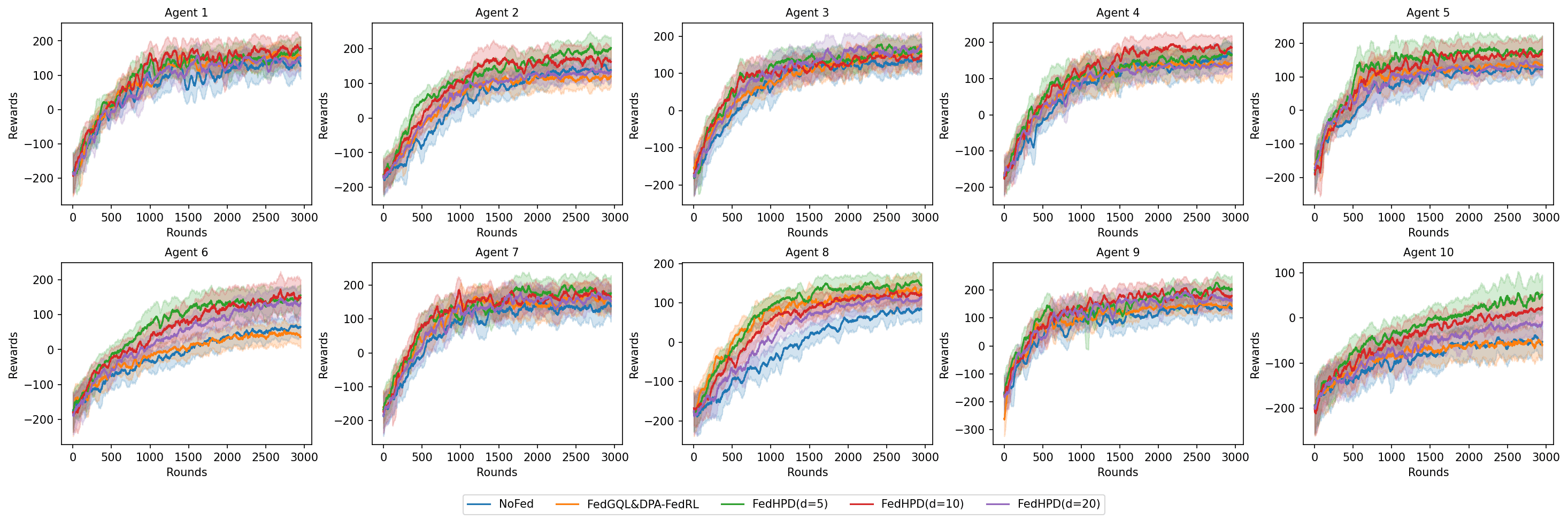}}
	\end{minipage}
	\caption{Comparisons of individual agent between DPA-FedRL and FedHPD ($d$ = 5, 10, 20).}
    \label{pic:Comparisons of individual agent baseline.}
\end{figure} 
\clearpage

\subsection{Comparisons of individual performance improvement by FedHPD ($d$ = 2, 5, 10, 20, 40, 80)}\label{Appendix: FedHPD under six d values}
\begin{figure}[h]	
	\begin{minipage}{1\linewidth}
	\centerline{\includegraphics[width=\textwidth]{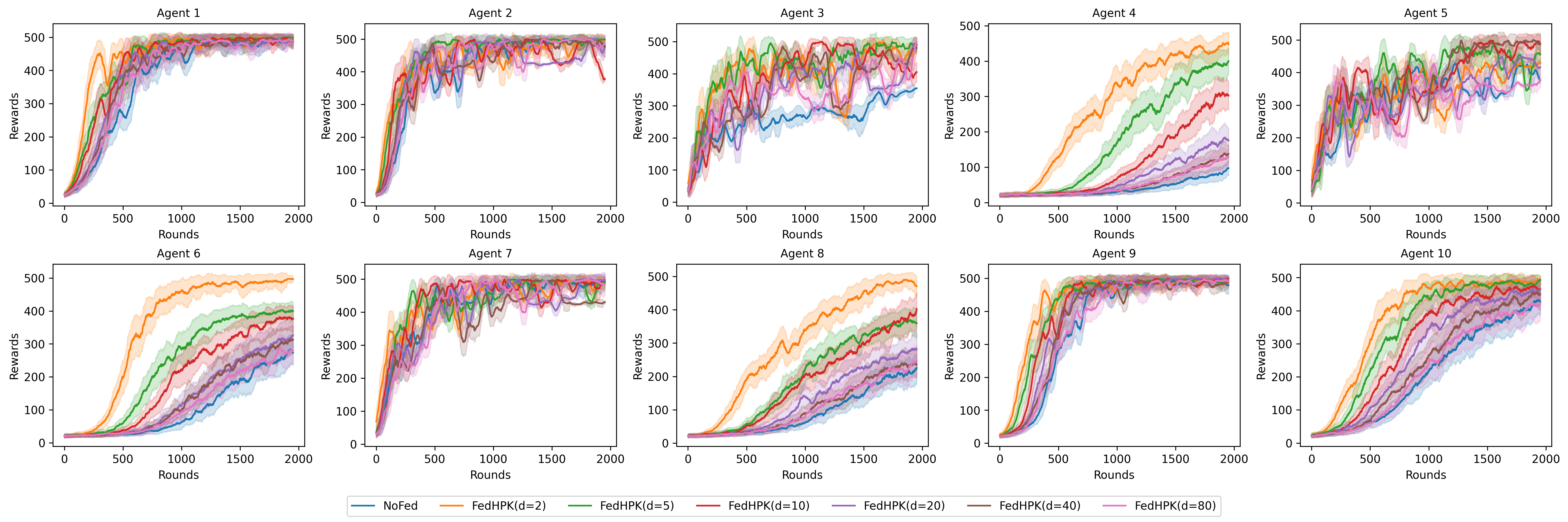}}	\centerline{\includegraphics[width=\textwidth]{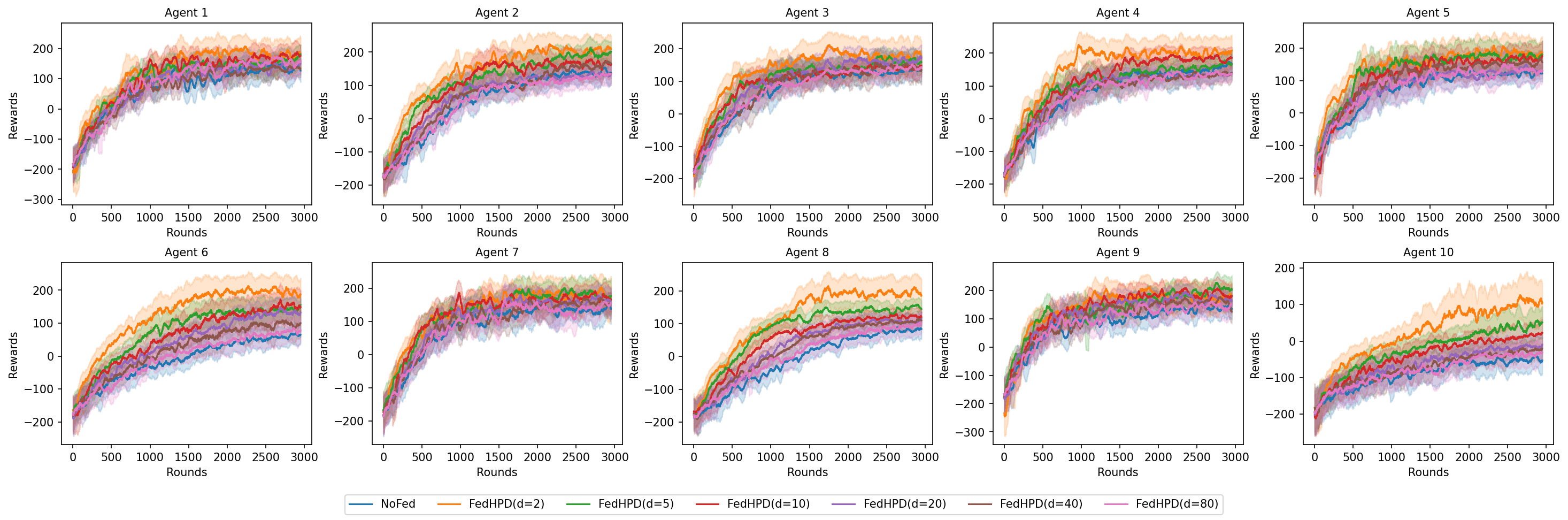}}
	\end{minipage}
	\caption{Comparisons of individual agent with different distillation intervals in Cartpole ($d$ = 2, 5, 10, 20, 40, 80).}
    \label{pic:Comparisons of individual agent with six d values.}
\end{figure} 

Fig. \ref{pic:Comparisons of individual agent with six d values.} illustrates the comparative performance of individual agents.
Firstly, we can determine that most agents trained using the FedHPD exhibit better performance compared to NoFed.
However, increasing the distillation interval $d$ may lead some agents towards suboptimal strategies, for example: In Cartpole, Agent 5 at $d$ = 80 and Agent 7 at $d$ = 40 show performance inferior to NoFed.
We hypothesize that the reason for this phenomenon may be attributed to excessively large distillation intervals, which potentially lead to a deterioration in the quality of global consensus, thereby compromising the original convergence process of the agents.

\clearpage

\subsection{Performance improvement efficacy of FedHPD under different public state sets}\label{Appendix: FedHPD under different public state sets}

To investigate the impact of public dataset on FedHPD, we select three groups of public state sets of the same size but with significant differences.
We display the data distribution after dimensionality reduction through principal component analysis in Fig.~\ref{pic:Distributions of different public datasets.}, where the differences between different state sets can be visually observed.
From the system comparison in Fig.~\ref{pic:Comparisons of system performance with different datasets.} and the individual comparison in Fig.~\ref{pic:Comparisons of individual agent with different datasets.}, we conclude that changes in the public state set have almost no impact on the improvement effect of FedHPD.

\begin{figure*}[h]	
	\centerline{\includegraphics[width=0.45\textwidth]{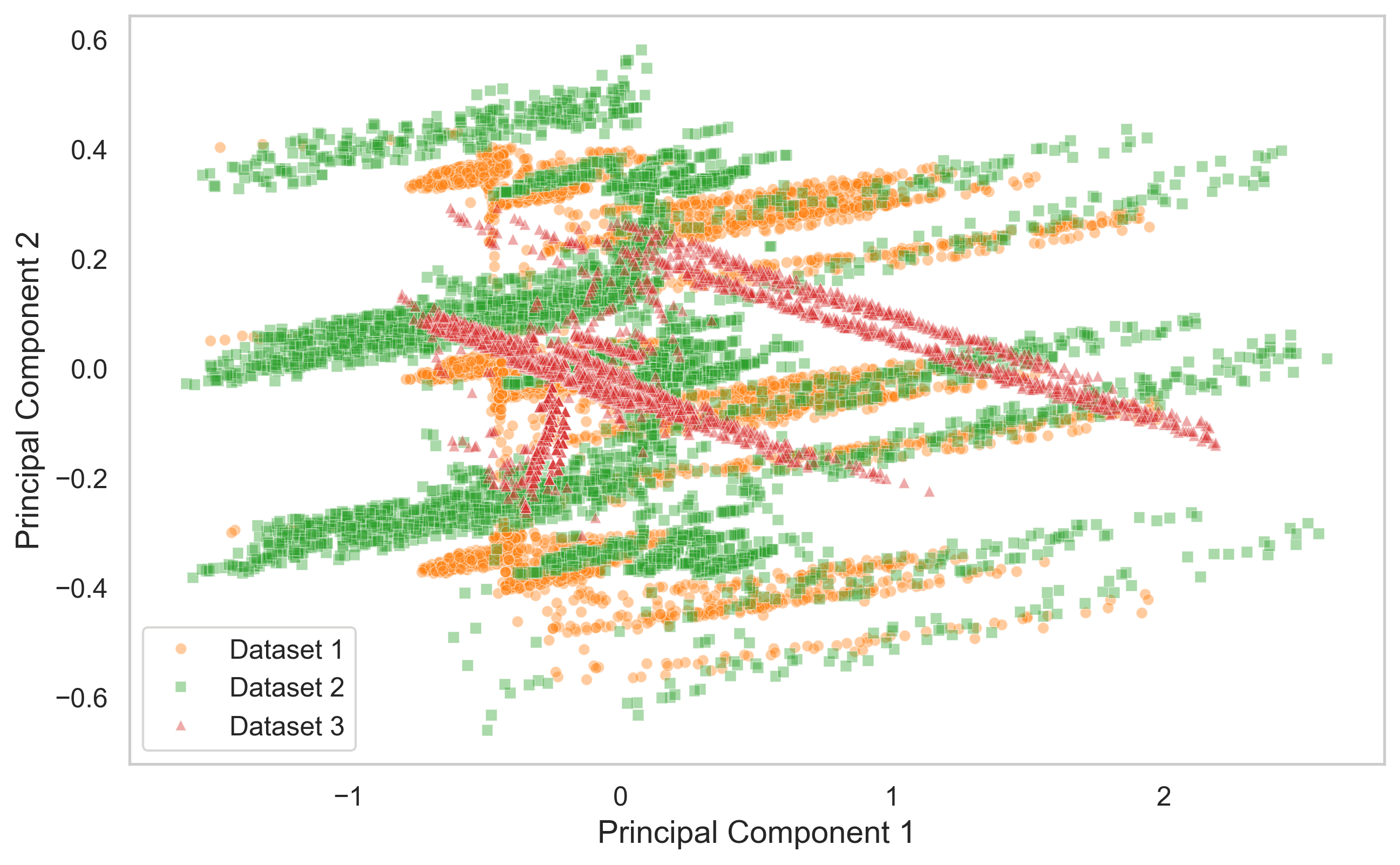}}
	\caption{Distributions of different public datasets (Employing PCA for dimensionality reduction).}
    \label{pic:Distributions of different public datasets.}
\end{figure*} 

\begin{figure*}[h]	
	\centerline{\includegraphics[width=0.4\textwidth]{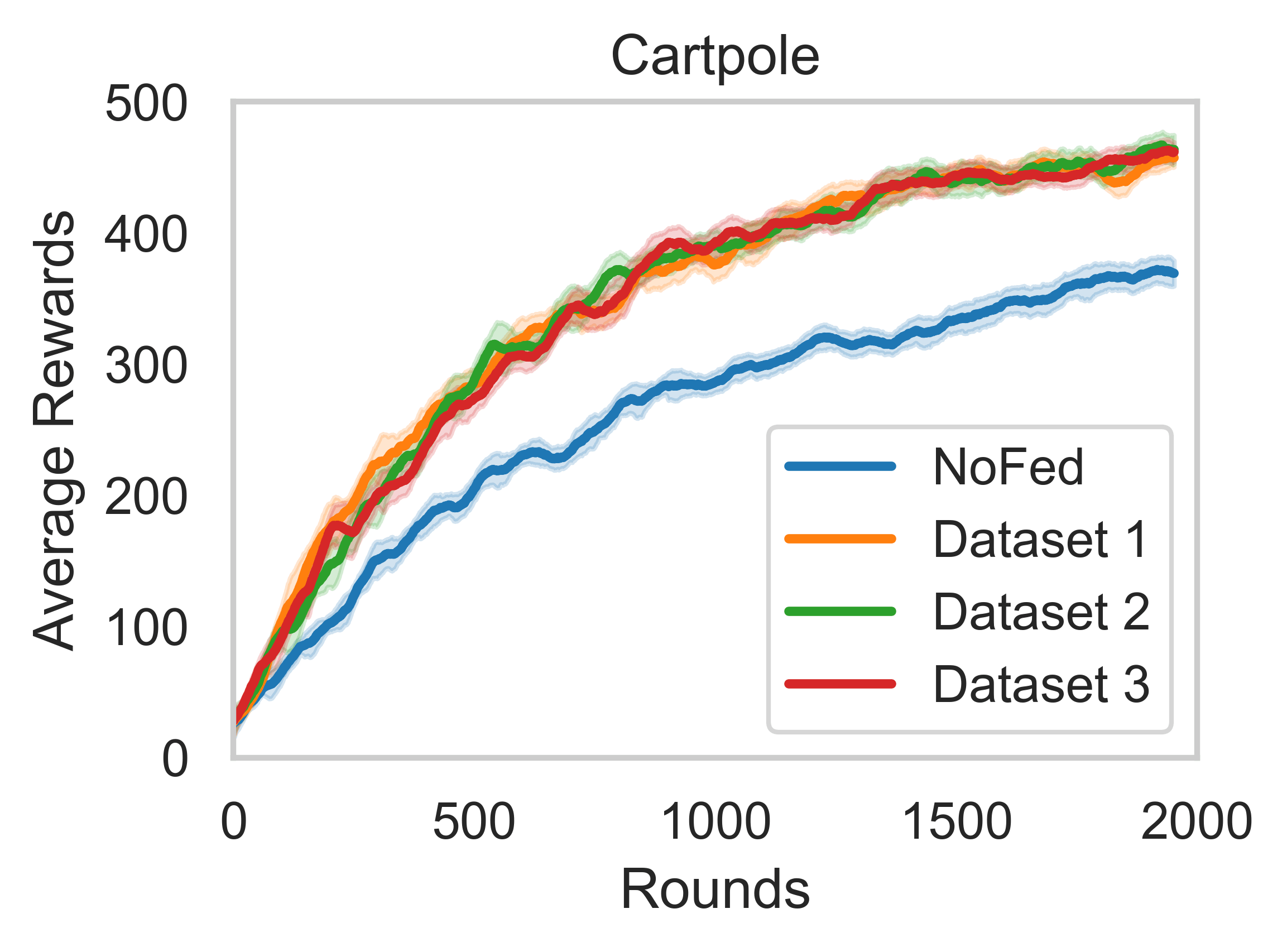}}
	\caption{Comparisons of system performance with different datasets in Cartpole ($d$ = 5).}
    \label{pic:Comparisons of system performance with different datasets.}
\end{figure*} 

\begin{figure*}[h]	
	\centerline{\includegraphics[width=\textwidth]{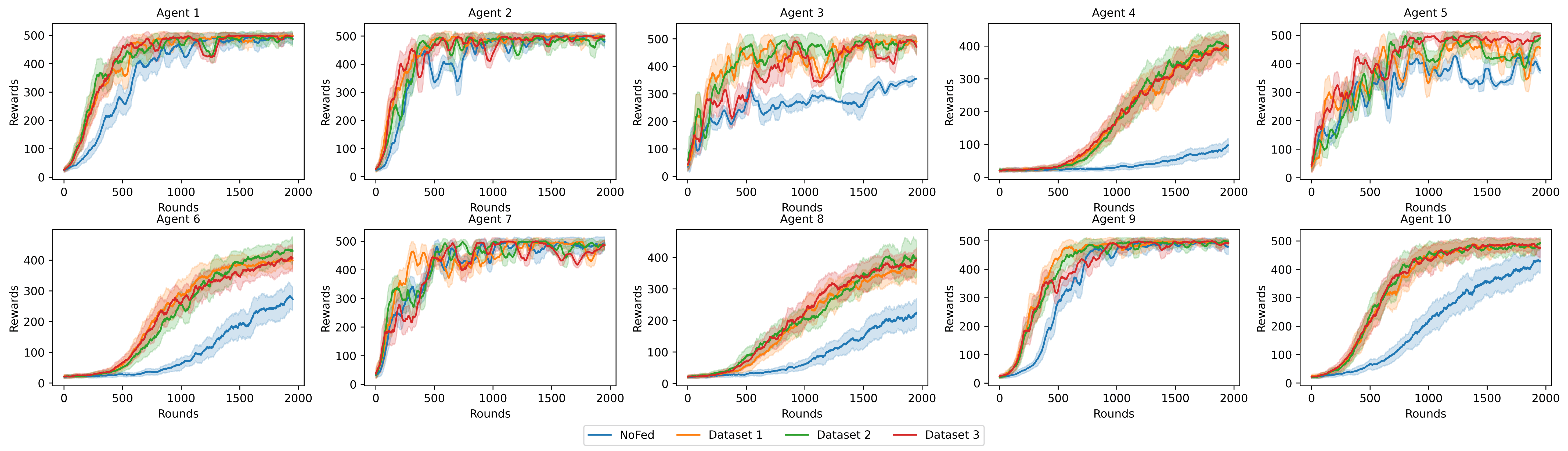}}
	\caption{Comparisons of individual agent with different datasets in Cartpole ($d$ = 5).}
    \label{pic:Comparisons of individual agent with different datasets.}
\end{figure*}

\end{document}